\documentclass{article}

\usepackage{arxiv}
\usepackage[utf8]{inputenc} 
\usepackage[T1]{fontenc}    
\usepackage{hyperref}       
\usepackage{url}            
\usepackage{booktabs}       
\usepackage{amsfonts}       
\usepackage{nicefrac}       
\usepackage{microtype}      
\usepackage{lipsum}
\usepackage{graphicx}
\graphicspath{ {./images/} }

\usepackage{times}
\usepackage{comment}
\usepackage{multirow}
\usepackage{tcolorbox}
\tcbuselibrary{breakable} 

\definecolor{hellblue}{RGB}{220, 230, 241}

\tcbset{
    colback=hellblue,
    colframe=hellblue,
    arc=2mm,
    boxrule=0pt,
    width=\textwidth,      
    enlarge left by=0pt,   
    enlarge right by=0pt,  
    left=0pt,              
    right=0pt,             
    top=0pt,
    bottom=0pt,
    before skip=\baselineskip,
    after skip=\baselineskip,
    breakable,
    before upper={\setlength{\parindent}{15pt}} 
}

\newenvironment{mybox}{%
    \begin{tcolorbox}
}{%
    \end{tcolorbox}
}
\usepackage{mdframed}

\usepackage{amsthm}

\usepackage{comment}
\usepackage{amsmath,amssymb}
\usepackage{mathtools}
\usepackage{enumitem}
\usepackage{booktabs}
\usepackage{xcolor}
\usepackage{makecell}

\usepackage{xcolor}

\usepackage{float}
\DeclareMathOperator{\Vol}{Vol}
\providecommand{\abs}[1]{\lvert#1\rvert}
\providecommand{\norm}[1]{\lVert#1\rVert}
\providecommand{\R}{\mathbb{R}}

\newcommand{\Prob}{\mathbb{P}}
\newcommand{\E}{\mathbb{E}}
\newcommand{\Normal}{\mathcal{N}}
\newcommand{\Wtwo}{\mathcal{W}_2}
\newcommand{\inputs}{\mathcal{X}}
\DeclareMathOperator{\tr}{tr}
\DeclareMathOperator{\Var}{Var}
\DeclareMathOperator{\Cov}{Cov}
\DeclareMathOperator{\rank}{rank}

\newcommand{\Wk}{\mathcal{W}_{2,k^d}}
\newtheorem{assumption}{Assumption}
\providecommand{\range}{\operatorname{range}}

\DeclareMathOperator{\erf}{erf}
\DeclareMathOperator{\ReLU}{ReLU}
\DeclareMathOperator{\GELU}{GELU}
\DeclareMathOperator{\SiLU}{SiLU}

\usepackage{tikz}
\usetikzlibrary{arrows.meta,positioning}
\usepackage{soul}
\sethlcolor{yellow}
\AtBeginDocument{\soulregister{\ref}{7}\soulregister{\eqref}{7}}

\usepackage{amsthm}
\usepackage{natbib}

\newtheorem{definition}{Definition}

\newtheorem{theorem}{Theorem}
\newtheorem{lemma}{Lemma}
\newtheorem{corollary}{Corollary}
\newtheorem{remark}{Remark}
\newtheorem{proposition}{Proposition}

\author{
 Luis Wiederhold\thanks{Equal contribution} \\
  School of Computation, Information and Technology\\
Technical University of Munich\\
\texttt{wiederhold.luis@gmx.de}
   \And
 Nil Ayday$^*$  \\
 	School of Computation, Information and Technology\\
Technical University of Munich\\
  \texttt{nil.ayday@tum.de}
  \And
 Debarghya Ghoshdastidar \\
  School of Computation, Information and Technology\\
Technical University of Munich
}

\title{Lifting Finite-Sample Gaussian Approximation of Wide Neural Networks to Functional Convergence}
\begin{document}
\maketitle
\setcounter{footnote}{0}

\begin{abstract}%
Randomly initialised neural networks of width $m$ converge to Gaussian processes as $m \to \infty$, with two types of quantitative bounds characterising this convergence. Finite-sample bounds compare the network with its Gaussian limit at a fixed finite sample set, on an arbitrary domain, in the
quadratic Wasserstein distance $\mathcal W_2$, achieving a convergence rate of $m^{-1/2}$ with explicit constants. However, these constants
grow with the number of samples, so the bound becomes uninformative as the sample set is refined. Functional bounds remove that dependence by comparing the two as random
functions, at the price of spherical input, metrics weaker than $\mathcal W_2$, or slower rates. In this paper, we therefore lift a finite-sample bound to a functional one. The lifting
refines the sample set until it is dense in the domain, so the bound must first be made
independent of the number of samples. We achieve this by measuring the error per sample,
which makes the constant uniform. For a polynomial activation and any compact input set, this gives
a rate of $m^{-1/2}$ in the supremum norm, with an explicit constant. 


\end{abstract}

\keywords{neural network Gaussian process, functional convergence rates, Wasserstein distance}

\section{Introduction}


The connection between wide neural networks and Gaussian processes (GPs) was first established in \citet{Neal1996}, which showed that a randomly initialised single-hidden-layer network converges in distribution to a GP as its width grows. The limiting neural network Gaussian Process (NNGP) is fully specified by its covariance kernel, which depends explicitly on the network's activation function and initialisation. For many standard activations, this kernel can be computed in closed form \citep{DBLP:conf/nips/Williams96}.
This convergence reduces the network to its kernel, enabling exact Bayesian inference for infinite-width networks \citep{DBLP:conf/iclr/LeeBNSPS18}, to distill large datasets into a handful of synthetic examples by optimizing them directly through the kernel \citep{DBLP:conf/nips/NguyenNXL21}, and to rank neural architectures without training them \citep{DBLP:conf/iclr/ChenGW21}. Beyond single-layer neural networks, this equivalence extends to architectures
such as deep fully connected networks
\citep{DBLP:conf/iclr/LeeBNSPS18,DBLP:conf/iclr/MatthewsHRTG18}. Convergence to
GPs has also been established for wide feedforward and recurrent architectures \citep{DBLP:journals/corr/abs-1910-12478},
with analogous kernels derived for transformers \citep{DBLP:conf/icml/HronBSN20},
graph convolutional networks \citep{DBLP:conf/iclr/NiuA023}, and graph
transformers \citep{3844975.3844987}. The same framework has since led to more
refined and practical approximations, such as tensor programs and  the maximal update
parametrisation ($\mu$P)
\citep{DBLP:conf/icml/YangH21, DBLP:journals/corr/abs-2203-03466}.

Although the qualitative picture is now well understood, the convergence rate to NNGP in
terms of width $m$ remains less developed. Quantitative convergence has been studied from
two distinct perspectives: comparing the finite-dimensional distributions of the network
at a fixed finite sample set, or treating the network and its limit as random functions to
establish convergence on a function space. Table~\ref{tab:related} collects the results of
both perspective.
\begin{table}[tb]
\centering
\small
\setlength{\tabcolsep}{3.5pt}
\renewcommand{\arraystretch}{1.0}
\begin{tabular}{@{}cl@{}cccccc@{}}
\toprule
 & \textbf{Result} & \textbf{Depth} & \textbf{Inputs} & \textbf{Metric} & \textbf{Activation} & \textbf{Rate} & \textbf{with} $\boldsymbol{n}$ \\
\midrule
\multirow{6}{*}{\rotatebox[origin=c]{90}{finite sample}}
  & \citep[Thm.~3.1]{DBLP:journals/ml/BasteriT24}
   & any  & finite\hphantom{$^{*}$} & $\mathcal W_2$ & Lipschitz & $m^{-1/2}$
   & $\sqrt n$ {\scriptsize\color{gray}\S\ref{sec:input_dependence}} \\
 & \citep[Thm.~6]{DBLP:conf/aabi/BordinoFF24}
   & $1$  & finite\hphantom{$^{*}$} & $\mathcal W_1$ & $\mathcal C^{2}$, poly.\,gr. & $m^{-1/2}$
   & $\sqrt n$ \hphantom{{\scriptsize\S\ref{sec:input_dependence}}}  \\
 & \citep[Thm.1.1]{DBLP:journals/corr/abs-2507-12686}
   & any  & finite\hphantom{$^{*}$} & $\mathcal W_1$ & Lipschitz & $m^{-1/6}$
   & \multirow{4}{*}{\rotatebox[origin=c]{90}{implicit}} \\
 & \citep[Thm.~3.6(1)]{DBLP:journals/corr/abs-2307-06092}
   & any & finite$^{*}$ & $d_{\mathrm{cvx}}$ & poly.\,gr. & $m^{-1/2}$ & \\
 & \citep[Thm.~1.1(a)]{Trevisan2023WideDN}
   & any & finite$^{*}$ & $\mathcal W_p$ & Lipschitz & $m^{-1}$ & \\
 & \citep[Thm.~5]{DBLP:journals/corr/abs-2504-08335}
   & any & finite$^{*}$ & $\mathcal W_2$ & $\mathcal C^{1}$ or PL & $m^{-1}$ & \\
\midrule
\multirow{6}{*}{\rotatebox[origin=c]{90}{functional}}
 & \citep[Thm.~2]{DBLP:conf/colt/EldanMS21}
   & $1$ & sphere & $\mathcal W_1(\mathcal C^{0})$ & polynomial & $m^{-1/6}$
   & \multirow{6}{*}{\rotatebox[origin=c]{90}{none}} \\
 & \citep[Thm.~5]{DBLP:conf/colt/EldanMS21}
   & $1$ & sphere & $\mathcal W_1(\mathcal C^{0})$ & monomial & $m^{-1/2}$ & \\
 & \citep[Thm.~1]{DBLP:conf/colt/Klukowski22}
   & $1$ & sphere & $\mathcal W_2(L^{2})$ & polynomial & $m^{-1/2}$ & \\
 & \citep[Thm.~1.2]{DBLP:journals/corr/abs-2306-16308}
   & any & sphere & $\mathcal W_1(\mathcal C^{0})$ & Lipschitz & $m^{-q_d}$ & \\
 & \citep[Thm.~3.16]{DBLP:journals/corr/abs-2307-06092}
   & any & ball & $\mathcal W_2(\mathcal C^{k})$ & smooth & $m^{-1/8}$ & \\
\cmidrule(l){2-7}
 & This paper (Thm.~\ref{thm:functional})
   & $1$ & compact & $\mathcal W_2(\mathcal C^{0})$ & polynomial & $m^{-1/2}$ & \\
\bottomrule
\end{tabular}
\caption{Quantitative Gaussian approximation for wide networks; rates are in the width
$m$, and $n$ is the number of inputs. Finite-sample bounds hold at a fixed sample set with
a constant growing in $n$; ``implicit'' means the bound depends on the input set, but this dependence is not quantified. Functional bounds avoid this at the cost of the input set, metric or rate.
finite$^{*}$: finite input, non-degenerate kernel; $d_{\mathrm{cvx}}$: convex distance; poly.\,gr.: polynomial growth; PL: piecewise linear.
Rates for \citet{DBLP:journals/corr/abs-2507-12686} and
\citet{DBLP:journals/corr/abs-2306-16308} are one-hidden-layer specialisations. For the latter, inputs lie on the unit sphere of $\R^{d}$ and $q_d = \frac{1}{8d-2}-\varepsilon$
for arbitrary $\varepsilon>0$.
By Corollary~\ref{cor:compact} the last row holds on every compact subset of $\R^{d}$,
with the constant of an enclosing hyperrectangle.}
\label{tab:related}
\end{table}

\paragraph{Finite-Sample Convergence.} The first approach fixes a finite sample set $\mathcal{X}=\{x_1,\dots,x_n\}$ and compares the random vector $f[\mathcal{X}]$ of $n$ network evaluations with the corresponding Gaussian vector $G[\mathcal{X}]$. The strength of the finite-input route is its broad applicability. Once the comparison is reduced to a pair of random vectors in $\mathbb{R}^n$, the tools of finite-dimensional Gaussian approximation apply: the distance between two centred Gaussian laws is controlled by their covariances, and couplings can be constructed explicitly \citep{DBLP:journals/ml/BasteriT24}. This yields bounds in the quadratic Wasserstein distance $\mathcal W_2$, a stronger metric
than the first-order $\mathcal W_1$, with explicit constants, and it asks nothing of the
input points themselves: they may form an arbitrary finite collection in any dimension, rather than requiring restrictive structural assumptions such as lying on a unit sphere. It also covers the standard Gaussian initialisation with bias terms and arbitrary depth \citep{DBLP:journals/ml/BasteriT24}. Working with a finite sample set, however, introduces a dependence on its size.

A bound at \(\mathcal{X}\) is a statement about a random vector in \(\mathbb{R}^n\), and with \(n\) fixed, quantities that vary with \(n\) are absorbed into the constants.
For example in \cite{DBLP:journals/ml/BasteriT24} these include the Frobenius norm of a \(n\times n\) covariance fluctuation and the smallest strictly positive eigenvalue of the limiting covariance matrix $K[\mathcal{X}]$.  Whether the bound survives refinement of the sample set depends on how these scale. In a learning-theoretic application the sample set is the
training sample, so $n$ needs to be large. \citet{DBLP:conf/icml/AroraDHLW19} bound the population
error of kernel regression with the limiting kernel by a term of order $n^{-1/2}$, and
carrying that guarantee over to a network of width $m$ means comparing the two on that
same sample. An $n$-dependent constant that grows with the sample size can offset the
statistical decay. Moreover, a finite-input bound describes the network only at the chosen inputs. Convergence of the finite-dimensional distributions is a statement about
each fixed finite sample set, so it does not imply convergence of the laws on a function
space. The second perspective treats the network directly as a random function.

\paragraph{Functional Convergence.}
\citet{DBLP:conf/iclr/BracaleFFP21} and
\citet{DBLP:journals/corr/abs-2107-01562} establish functional convergence without rates.
The lower half of Table~\ref{tab:related} collects the quantative results. These avoid the dependence on a
prescribed finite sample set, but each restricts the setting elsewhere, in the domain or in the metric. The inputs lie on a sphere in
\cite{DBLP:conf/colt/EldanMS21,DBLP:conf/colt/Klukowski22,DBLP:journals/corr/abs-2306-16308},
and \citet{DBLP:conf/colt/Klukowski22} draws the weights from a sphere as well.  A sphere
has no interior, so such a bound says nothing about a ball, a hyperrectangle, or any set with positive volume. The metric is the weaker
$\mathcal W_1$ distance in
\cite{DBLP:conf/colt/EldanMS21,DBLP:journals/corr/abs-2306-16308}. Since $\mathcal W_1\le\mathcal W_2$, a bound in $\mathcal W_2$ implies one in
$\mathcal W_1$ but not conversely.
Closest to our
setting is \citet{DBLP:journals/corr/abs-2307-06092} with an open ball in any dimension, arbitrary depth, the same Gaussian initialisation
with biases, and a $\mathcal C^{k}$ norm stronger than ours, since it controls derivatives
as well. Unlike a sphere, their ball is full-dimensional but they pay for the larger domain and the stronger metric with the rate $m^{-1/8}$, against the $m^{-1/2}$ available on
the sphere. Apart from the last row, no functional row of Table~\ref{tab:related} combines a general
compact set of inputs with the rate $m^{-1/2}$.


\begin{mybox}
The two perspectives are therefore complementary: finite-input bounds are general in
the domain, the initialisation and the metric but carry a constant that depends on
the input set. In contrast, functional bounds remove that dependence at the price of
restricted domains and metrics.
To our knowledge, no functional bound in the quadratic Wasserstein distance with
respect to the supremum norm attains the rate $m^{-1/2}$ on a
general compact domain under the standard Gaussian initialisation.
A functional
bound with the generality of the finite-input analysis would give guarantees that
do not weaken as the sample grows.
\end{mybox}

\paragraph{Contribution.} Our result combines the two perspectives. We start from the finite-sample bound of
\cite{DBLP:journals/ml/BasteriT24} and prove that under an effective-rank condition, which we verify for polynomial activations, the constant of \cite{DBLP:journals/ml/BasteriT24} is at most of order $\sqrt n$. The
bound therefore reads $\sqrt{n/m}$, so the width must grow at least as fast as the number
of inputs for it to be informative. Measuring the error per input removes this growth entirely. Then we provide a chain of results that converts this finite-sample bound into
a functional bound in the supremum norm. For one hidden layer, any polynomial activation, and inputs in a hyperrectangle, we obtain $\mathcal{W}_2^{C^{0}(I)}(f,G) = O(m^{-1/2})$ with explicit constants, and by restriction
(Corollary~\ref{cor:compact}) the bound holds on every compact subset of $\R^{d}$. Among the functional bounds of Table~\ref{tab:related}, the rate $m^{-1/2}$ appears only
on the sphere. Theorem~\ref{thm:functional} reaches it on an arbitrary compact set, so the
spherical setting of
\cite{DBLP:conf/colt/EldanMS21,DBLP:conf/colt/Klukowski22,DBLP:journals/corr/abs-2306-16308}
is a special case. This is a first step towards merging
the two formalisations of quantitative convergence, carrying the generality of the
finite-sample bounds into a statement about the random function.

Section~\ref{sec:prelim} fixes notation, introduces the network and its Gaussian process
limit, and states the finite-sample bound we start from, whose constant grows at most like
$\sqrt n$ and becomes independent of the input set once the error is measured per input.
Section~\ref{sec:functional} turns that uniformity into a functional bound by refining the
input set until it is dense in the domain, a limit that is available only because the
constant no longer grows with the number of inputs.
Section~\ref{sec:discussion} discusses the open questions.

\section{Preliminaries and notes on finite-sample bound}\label{sec:prelim}
We fix here the objects and notation used throughout: the network and its Gaussian process
limit (Section~\ref{sec:pr_nngp}), the distance in which the two are compared (Section~\ref{sec:input_dependence}), and the finite-sample bound we start
from, whose constant we track as the sample set grows (Section~\ref{sec:input_dependence}).
\paragraph{Notation.}
For $v\in\R^{n}$ we write $\norm{v}=\bigl(\sum_{i=1}^{n}v_i^2\bigr)^{1/2}$
for the Euclidean norm, and for a matrix $A=(A_{ij})$ we write
$\norm{A}=\bigl(\sum_{i,j}A_{ij}^2\bigr)^{1/2}$
for the Frobenius norm, which coincides with the Euclidean norm when $A$ is a vector.
For a symmetric positive semi-definite $A\in\R^{n\times n}$ with eigenvalues $\lambda_1,\dots,\lambda_n\ge 0$ we write
\( \lambda_{\min}(A)=\min_i\lambda_i\),
 \( \lambda^{+}(A)=\min\{\lambda_i:\lambda_i>0\},
\) and $\sqrt{A}$ for the symmetric positive semi-definite square root, obtained by
replacing each $\lambda_i$ by $\sqrt{\lambda_i}$ in the spectral decomposition.
$\lambda_{\min}$ and $\lambda^{+}$ differ exactly when $A$ is singular, which is the regime of
Section~\ref{sec:input_dependence}. For a function $h$ and a finite set $\mathcal T=\{t_1,\dots,t_n\}$ in its domain, fixed
together with an enumeration, we write $h[\mathcal T]=(h(t_1),\dots,h(t_n))$ for the vector
of evaluations; if $h$ is $\R^{m}$-valued, $h[\mathcal T]$ is the $m\times n$ matrix whose
$i$-th column is $h(t_i)$. For a multi-index $\alpha=(\alpha_1,\dots,\alpha_d)\in\mathbb{N}_0^{d}$ we write
$\abs{\alpha}=\sum_{j=1}^{d}\alpha_j$ and
$x^{\alpha}=x_1^{\alpha_1}\cdots x_d^{\alpha_d}$ for $x\in\R^{d}$. The
polynomials of total degree at most $r$ in $d$ variables form a vector space of
dimension $\binom{r+d}{d}$, spanned by $\{x^{\alpha}:\abs{\alpha}\le r\}$.

\paragraph{Gaussian vectors.} For a random vector $X\in\mathbb{R}^n$ with finite second moment, its covariance matrix is defined as
\(
\Sigma(X)=\mathbb{E}\bigl[(X-\mathbb{E}[X])(X-\mathbb{E}[X])^\top\bigr].
\) If
$X$ is centred then $\Sigma(X)=\E[XX^{\top}]$. 
We write $\mathcal N(0,\Sigma)$ for the centred Gaussian law on $\R^{n}$ with covariance
$\Sigma$.

\subsection{Random neural networks and their Gaussian process (GP) limit}
\label{sec:pr_nngp}
We now define the two objects the paper compares. The first is a network at
finite width $m$; the second is the GP it approaches as the width
grows ($m \to \infty$), whose kernel is determined by the activation and the initialisation
variances.

\begin{definition}[randomly initialised shallow network]\label{def:network}
Let $\phi:\R\to\R$ be an activation function. A \emph{shallow neural network} of width $m$ is a function
$f:\R^{d}\to\R$ of the form
\[
  g(x)=Wx+b\in\R^{m},
  \qquad
  f(x)=a^{\top}\phi\bigl(g(x)\bigr)+c\in\R ,
\]
with parameters $W\in\R^{m\times d}$, $b,a\in\R^{m}$, $c\in\R$, where $\phi$ acts
componentwise and $g$ is the \emph{pre-activation}. At \emph{random initialisation} the
parameters are independent, with
\(
  W_{rj}\sim\Normal\Bigl(0,\tfrac{\sigma_w^2}{d}\Bigr),\quad
  b_r\sim\Normal\bigl(0,\sigma_b^2\bigr),\quad
  a_r\sim\Normal\Bigl(0,\tfrac{\tau_a^2}{m}\Bigr),\quad
  c\sim\Normal\bigl(0,\tau_c^2\bigr),
\)
with $\sigma_w^2,\tau_a^2>0$ and $\sigma_b^2,\tau_c^2\ge 0$.
\end{definition}

The network in Definition~\ref{def:network} maps $d$-dimensional inputs to scalar outputs. Multivariate outputs can be handled analogously and are omitted for notational simplicity. Throughout, $I\subset\R^{d}$ denotes a fixed compact set, and input sets
$\inputs=\{x_1,\dots,x_n\}$ are finite subsets of $I$, with corresponding
evaluations $f[\inputs]\in\R^{n}$ and $g[\inputs]\in\R^{m\times n}$. Since $W$
and $b$ are Gaussian, the pre-activation is a GP for every width.
Applying $\phi$ destroys Gaussianity, but the output is a sum of $m$ independent
terms, so by the central limit theorem it is asymptotically Gaussian as
$m\to\infty$. The covariance of that limit is computable, and the two kernels are
the following.

\begin{definition}[Gaussian process and NNGP kernels]\label{def:nngp}
A \emph{Gaussian process} on $I$ is a family $(G(x))_{x\in I}$ of real random variables on
a common probability space, every finite subfamily of which is a Gaussian vector; it is
\emph{centred} if $\E[G(x)]=0$ for every $x\in I$, and its \emph{covariance kernel} is
$(x,y)\mapsto\E[G(x)G(y)]$. A centred Gaussian process is determined in law by its
covariance kernel, and every symmetric positive semi-definite function on $I\times I$ is
the covariance kernel of one \citep[Theorem~6.16]{kallenberg2002foundations}.
For $x,y\in I$ define
\begin{align}
  \kappa(x,y)&=\tfrac{\sigma_w^2}{d}\,\langle x,y\rangle+\sigma_b^2,
    \label{eq:kappa}\\[2pt]
  K(x,y)&=\tau_a^2\,\E\bigl[\phi(Z_x)\,\phi(Z_y)\bigr],
    \qquad (Z_x,Z_y)\sim\Normal\Bigl(0,
      \begin{psmallmatrix}\kappa(x,x)&\kappa(x,y)\\
        \kappa(x,y)&\kappa(y,y)\end{psmallmatrix}\Bigr),
    \label{eq:K}\\[2pt]
  K_c(x,y)&=K(x,y)+\tau_c^2 .
    \label{eq:Kb}
\end{align}
Here $\kappa$ is the covariance of the pre-activation coordinates of
Definition~\ref{def:network}, the same for every coordinate, and for a finite
$\inputs=\{x_1,\dots,x_n\}\subseteq I$ we write $\kappa[\inputs]$, $K[\inputs]$,
$K_c[\inputs]\in\R^{n\times n}$ for the matrices with entries $\kappa(x_i,x_j)$,
$K(x_i,x_j)$, $K_c(x_i,x_j)$. All three kernels are symmetric and positive semi-definite,
so the \emph{neural network Gaussian process} (NNGP) associated with
Definition~\ref{def:network} is well defined as the centred Gaussian process $G$ on $I$
with covariance kernel $K_c$.
\end{definition}

The pre-activation $g[\inputs]$ consists of $m$ independent rows with law
$\Normal(0,\kappa[\inputs])$ and is exactly Gaussian at every width. The output
$f[\inputs]$ is not. A finite-input Gaussian approximation bound such as
Theorem~\ref{thm:bt} measures, at finite width, how far the law of $f[\inputs]$ is from
$\Normal(0,K_c[\inputs])$. It is stated in
the quadratic Wasserstein distance, defined next.


\subsection{The quadratic Wasserstein distance}
\label{sec:pr_distance}
\begin{definition}[quadratic Wasserstein distance]\label{def:w2}
Let $(E,\norm{\cdot}_E)$ be a separable Banach space and let $p,q$ be probability
measures on $E$ with finite second moment. Then
\[
  \Wtwo^{E}(p,q)
  =\inf\Bigl\{\sqrt{\E\bigl[\norm{X-Y}_E^{2}\bigr]}
    \;:\; X\sim p,\ Y\sim q\Bigr\},
\]
the infimum running over all \emph{couplings} of $p$ and $q$, i.e.\ all pairs
$(X,Y)$ of $E$-valued random variables on a common probability space with the
given marginals. We write $\Wtwo^{E}(X,Y)$ for $\Wtwo^{E}(\Prob_X,\Prob_Y)$. Four instances occur below, differing only in the state space $E$ or its norm:
\begin{center}
\small
\renewcommand{\arraystretch}{1.5}
\setlength{\tabcolsep}{5pt}
\begin{tabular}{@{}lllll@{}}
\toprule
$E$ & $\norm{\cdot}_E$ & notation & compares & used in \\
\midrule
$\R^{n}$
  & $\norm{v}=\bigl(\sum_{i=1}^{n}v_i^{2}\bigr)^{1/2}$
  & $\Wtwo=\Wtwo^{\R^{n}}$
  & finite sample set, total error
  & Thm.~\ref{thm:bt} \\
$\R^{n}$
  & $\norm{v}_{n}=\bigl(n^{-1}\sum_{i=1}^{n}v_i^{2}\bigr)^{1/2}$
  & $\mathcal{W}_{2,n}$
  & finite sample set, error per sample
  & \S\ref{sec:input_dependence} \\
$L^{2}(I)$
  & $\bigl(\int_I h(x)^{2}\,dx\bigr)^{1/2}$
  & $\Wtwo^{L^{2}(I)}$
  & random functions, mean-square error
  & \S\ref{sec:functional} \\
$\mathcal C^{0}(I)$
  & $\sup_{x\in I}\abs{h(x)}$
  & $\Wtwo^{\mathcal C^{0}(I)}$
  & random functions, supremum error
  & \S\ref{sec:functional} \\
\bottomrule
\end{tabular}
\end{center}
All four are separable Banach spaces. In $\R^{n}$ the norm $\norm{\cdot}_{n}$ is the root mean square
of the coordinates, so
$\mathcal{W}_{2,n}=n^{-1/2}\Wtwo$ measures the transport cost \emph{per input}, where $\Wtwo=\Wtwo^{\R^{n}}$ is the Euclidean case; it is again
a metric, being a positive multiple of one. Since
$\norm{h}_{L^{2}(I)}\le\sqrt{\Vol(I)}\,\norm{h}_{\mathcal C^{0}(I)}$, we have
$\Wtwo^{L^{2}(I)}\le\sqrt{\Vol(I)}\,\Wtwo^{\mathcal C^{0}(I)}$, so the
supremum-norm statement is the strongest of the two functional bounds.
\end{definition}

\begin{remark}[coupling convention]\label{rem:coupling}
Since $\Wtwo^{E}$ is an infimum over couplings, a single explicit coupling bounds $\Wtwo^{E}$ from above (Proposition~\ref{prop:w2}\,(1)).
Throughout, $(f,G)$ therefore denotes a pair defined on a common probability space with
the laws of Definitions~\ref{def:network} and~\ref{def:nngp}: all bounds on
$\norm{f-G}_{E}$ are proved for almost every realisation, and the expectation and the infimum are taken only
at the end. Further properties of $\Wtwo$ are collected in Appendix~\ref{app:w2}.
\end{remark}

\subsection{The finite-sample bound and its data dependence}
\label{sec:input_dependence}

Quadratic Wasserstein distance is well suited for Gaussian approximation since the distance between two
centred Gaussian laws is bounded by the Frobenius distance of the square roots of
their covariances, so covariance perturbations can be tracked explicitly. This is the mechanism behind the bound of \citet{DBLP:journals/ml/BasteriT24}, revisited in
Theorem~\ref{thm:bt}. We state their result under the following standard assumptions about the activation function.
\begin{assumption}[activation function]\label{ass:activation}
The activation $\phi:\R\to\R$ of Definition~\ref{def:network} satisfies:
\begin{enumerate}[label=\textup{(\arabic*)},ref=\arabic*,leftmargin=2.2em,
                  topsep=2pt,itemsep=2pt]
  \item\label{it:growth} \emph{Polynomial growth.} There are $A,p<\infty$ with
    $\abs{\phi(z)}\le A(1+\abs{z})^{p}$ for all $z\in\R$.
  \item\label{it:lipschitz} \emph{Locally Lipschitz with derivative of
    polynomial growth.} $\phi$ is absolutely continuous and there are
    $A',p'<\infty$ with $\abs{\phi'(z)}\le A'(1+\abs{z})^{p'}$ for almost every\
    $z\in\R$.
  \item\label{it:nondeg} \emph{Non-degeneracy.} For every $x\in I$ the variance
    $\kappa(x,x)$ is strictly positive and $\phi$ does not vanish
    $\Normal\bigl(0,\kappa(x,x)\bigr)$-almost everywhere.
\end{enumerate}
\end{assumption}

Assumption~\ref{ass:activation} is satisfied by $\tanh$, the logistic sigmoid,
$\erf$, $\ReLU$ and its leaky and smooth variants ($\GELU$, $\SiLU$, softplus),
by every polynomial, and by products such as $z\tanh(z)$. It excludes exponential
growth, which violates Assumption~\ref{ass:activation}\eqref{it:growth}; unbounded derivatives at the origin such
as $\abs{z}^{\alpha}$ with $\alpha\in(0,1)$, which violate Assumption~\ref{ass:activation}\eqref{it:lipschitz};
and jump discontinuities, which are not absolutely continuous. Assumption~\ref{ass:activation}\eqref{it:nondeg} also involves the bias and the
domain: since $\kappa(x,x) = \frac{\sigma_w^2}{d}\|x\|^2 + \sigma_b^2$, it holds for
all the above activations if and only if $\sigma_b>0$ or $0\notin I$.

\begin{theorem}[{adapted from \cite[Theorem~3.1]{DBLP:journals/ml/BasteriT24}}]\label{thm:bt}
Let $\inputs = \{x_1,\dots,x_n\} \subset I$ be a finite sample set, let $f$ be the
network of Definition~\ref{def:network} and let $G$ be the Gaussian process of
Definition~\ref{def:nngp}. Under Assumption~\ref{ass:activation}, for every width
$m \in \mathbb N$,
\begin{equation}\label{eq:bt}
  \Wtwo\bigl(f[\inputs],\, G[\inputs]\bigr)
  \;\le\; \frac{C(\phi,\inputs)}{\sqrt m},
\end{equation}
with the explicit constant
\begin{equation}\label{eq:bt-constant}
  C(\phi,\inputs)
  \;=\;
  \frac{\sqrt{\E\Bigl[\bigl\lVert
      \tau_a^2\,\phi(Z)\phi(Z)^{\top} - K[\inputs]
    \bigr\rVert^{2}\Bigr]}}
    {\sqrt{\lambda^{+}\bigl(K[\inputs]\bigr)}},
  \qquad
  Z \sim \mathcal N\bigl(0,\kappa[\inputs]\bigr) \in \R^{n}.
\end{equation}
The output bias contributes the same independent translation to $f[\inputs]$ and to
$G[\inputs]$, so by Proposition~\ref{prop:w2}\,(3) it cancels and the constant involves
$K[\inputs]$ rather than $K_c[\inputs]$. $C(\phi,\inputs)$ depends only on the activation $\phi$, the input
set $\inputs$ and the variances $\sigma_w^2,\sigma_b^2,\tau_a^2$, but not on the
width $m$.
\end{theorem}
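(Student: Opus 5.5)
We prove Theorem~\ref{thm:bt} by exhibiting one explicit coupling and bounding its cost, following \citet{DBLP:journals/ml/BasteriT24}.

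\textbf{Step 1: remove the output bias.} The output bias $c$ is an independent $\Normal(0,\tau_c^2)$ shift added to every coordinate of $f[\inputs]$. Likewise, $G[\inputs]\sim\Normal(0,K[\inputs]+\tau_c^2\mathbf 1\mathbf 1^\top)$ can be written as $G'+c\mathbf 1$ with $G'\sim\Normal(0,K[\inputs])$ independent of $c$. By translation invariance under a common independent shift (Proposition~\ref{prop:w2}\,(3)), it suffices to bound $\Wtwo\bigl(a^\top\phi(g[\inputs]),\,G'\bigr)$.

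\textbf{Step 2: condition on the hidden layer.} Write $\Phi=\phi(g[\inputs])\in\R^{m\times n}$, whose rows $\phi(Z^{(r)})$ are i.i.d.\ with $Z^{(r)}\sim\Normal(0,\kappa[\inputs])$. Conditionally on $(W,b)$, the output $F=a^\top\Phi$ is exactly Gaussian with covariance
\[
\hat K=\tfrac{\tau_a^2}{m}\sum_{r=1}^m\phi(Z^{(r)})\phi(Z^{(r)})^\top .
\]
So, conditionally, we only need to compare two centred Gaussians $\Normal(0,\hat K)$ and $\Normal(0,K[\inputs])$. The standard coupling $F=\sqrt{\hat K}\,\xi$, $G'=\sqrt{K[\inputs]}\,\xi$ with a common standard Gaussian $\xi$ gives
\[
\E\bigl[\|F-G'\|^2\mid W,b\bigr]\le \bigl\|\sqrt{\hat K}-\sqrt{K[\inputs]}\bigr\|^2 .
\]
Gluing these conditional couplings measurably over $(W,b)$, via the gluing/conditioning property in Appendix~\ref{app:w2}, then gives
\[
\Wtwo^2\le \E\,\bigl\|\sqrt{\hat K}-\sqrt{K[\inputs]}\bigr\|^2 .
\]

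\textbf{Step 3: square-root perturbation (the main obstacle).} We need
\[
\bigl\|\sqrt A-\sqrt B\bigr\|\le \frac{\|A-B\|}{\sqrt{\lambda^{+}(B)}}
\]
for $A=\hat K$ and $B=K[\inputs]$ positive semi-definite, when $B$ may be singular. The usual inequality uses $\lambda_{\min}(B)$, so the singular case is where the care lies.

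The key observation is that $\range \hat K\subseteq\range K[\inputs]$ almost surely. To see this, suppose $v\in\ker K[\inputs]$. Then
\[
0=v^\top K[\inputs]v=\tau_a^2\,\E\bigl[(v^\top\phi(Z))^2\bigr],
\]
so $v^\top\phi(Z^{(r)})=0$ almost surely for every $r$, hence $\hat Kv=0$. Restricting both matrices to $\range K[\inputs]$, the restricted $B$ has smallest eigenvalue $\lambda^{+}(B)$.

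On that subspace we use the Sylvester identity
\[
\sqrt A(\sqrt A-\sqrt B)+(\sqrt A-\sqrt B)\sqrt B=A-B .
\]
Writing it in the eigenbasis, the entries of $X=\sqrt A-\sqrt B$ satisfy $X_{ij}(\sqrt{\alpha_i}+\sqrt{\beta_j})=(A-B)_{ij}$ in mixed eigenbases, where $\alpha_i,\beta_j$ are the eigenvalues of $A,B$. Since $\sqrt{\alpha_i}+\sqrt{\beta_j}\ge\sqrt{\lambda^{+}(B)}$, this yields the Frobenius bound with $\lambda^{+}$.

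\textbf{Step 4: variance computation.} It remains to compute $\E\|\hat K-K[\inputs]\|^2$. The matrix $\hat K-K[\inputs]$ is an average of $m$ i.i.d.\ centred matrices $\tau_a^2\phi(Z^{(r)})\phi(Z^{(r)})^\top-K[\inputs]$, so by independence
\[
\E\|\hat K-K[\inputs]\|^2=\tfrac1m\,\E\bigl\|\tau_a^2\phi(Z)\phi(Z)^\top-K[\inputs]\bigr\|^2 .
\]
This is finite by Assumption~\ref{ass:activation}\eqref{it:growth}, since Gaussian moments of all orders exist. Combining Steps 2--4 gives \eqref{eq:bt} with constant \eqref{eq:bt-constant}.

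Assumption~\ref{ass:activation}\eqref{it:nondeg} ensures $K[\inputs]\neq0$, so $\lambda^{+}$ is well defined. The Lipschitz part of the assumption is not needed along this route.
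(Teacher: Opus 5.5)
Your proposal is correct and follows essentially the same route as the paper's proof. Both remove the output bias by an independent translation, condition on $(W,b)$ to reduce to comparing $\Normal(0,\widehat K_m)$ with $\Normal(0,K[\inputs])$, use the almost-sure inclusion $\range(\widehat K_m)\subseteq\range(K[\inputs])$ to get the square-root perturbation bound with $\lambda^{+}$, and finish with the i.i.d.\ variance identity; the only difference is that you build the common-$\xi$ coupling and prove the perturbation inequality via the Sylvester identity yourself, where the paper cites Proposition~\ref{prop:w2}\,(4)--(5) and the standard positive-definite bound inside Lemma~\ref{lem:sqrt}.
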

Theorem~3.1 of \citet{DBLP:journals/ml/BasteriT24} makes the stronger assumption that $\phi$ is globally Lipschitz. The proof under Assumption~\ref{ass:activation} is given in Appendix~\ref{sec:proofs_prelim}. The bound of Theorem~\ref{thm:bt} holds at any fixed finite sample set, but its constant
$C(\phi,\inputs)$ depends on that set.  Section~\ref{sec:functional} evaluates $f$ and $G$ on grids that become dense in $I$, and
hence arbitrarily large, so the growth of $C(\phi,\inputs)$ decides whether a functional
comparison is possible. This remainder of this section determines how $C(\phi,\inputs)$ behaves: the numerator of \eqref{eq:bt-constant} grows at most linearly in the number of
inputs, while the denominator admits no lower bound in general, so that enlarging the
input set can render \eqref{eq:bt} vacuous. A lower bound on the denominator must therefore be assumed. Linear growth in $\sqrt{n}$ is the most that can be
demanded, since $\lambda^{+}(K[\inputs])$ never exceeds $\tr K[\inputs]$, which grows
linearly under Assumption~\ref{ass:activation}.

\begin{assumption}[effective-rank condition]\label{ass:eff-rank}
A collection $\mathcal F$ of finite sample sets $\inputs\subset I$ satisfies the
\emph{effective-rank condition with constant $\nu>0$} if, writing $n=\abs{\inputs}$,
\begin{equation}\label{eq:eff-rank}
  \lambda^{+}\bigl(K[\inputs]\bigr)\;\ge\;\nu\,n
  \qquad\text{for every }\inputs\in\mathcal F .
\end{equation}
\end{assumption}

Assumption~\ref{ass:eff-rank} excludes every
activation for which the $\operatorname{rank} (K[\mathcal X])$ is unbounded (see proof of Proposition~\ref{cor:constant} for details). In particular every activation whose
kernel $K$ is strictly positive definite on $I$ is excluded, since then
$\operatorname{rank} (K[\mathcal X]) = |\mathcal X|$. 
\footnote{The rows of Table~\ref{tab:related} that reach $m^{-1}$ do so under exactly this
non-degeneracy, so the $m^{-1/2}$ rate of Theorem~\ref{thm:bt} is the best available rate in our setting.} The condition is therefore a
finite-rank condition, and Lemma~\ref{lem:polynomial-er} verifies it for polynomial activations on the grids of Section~\ref{sec:functional}. Under Assumption~\ref{ass:activation} the numerator of the constant in Equation~\eqref{eq:bt-constant} grows at most linearly in $n$, while under Assumption~\ref{ass:eff-rank} the
denominator grows like $\sqrt n$, leaving the bound of
Theorem~\ref{thm:bt} with a factor $\sqrt n$ and a constant that depends neither on the
sample set nor on the width. The following proposition makes these statements precise, and 
is proved in Appendix~\ref{sec:proofs_prelim}.

\begin{proposition}[the constant of Theorem~\ref{thm:bt}]\label{cor:constant}
Let $\inputs\subset I$ be a finite sample set with $n=\abs{\inputs}$.
\begin{enumerate}
\item Under Assumption~\ref{ass:activation} with growth constants $A,p$, the numerator of
  \eqref{eq:bt-constant},
\[
  N:=\sqrt{\E\Bigl[\bigl\lVert\tau_a^2\,\phi(Z)\phi(Z)^{\top}
      -K[\inputs]\bigr\rVert^{2}\Bigr]},
  \qquad Z\sim\Normal\bigl(0,\kappa[\inputs]\bigr)\in\R^{n},
\]
  satisfies $N\le \tau_a^2 A^2\sqrt{M_{4p}}\,n$, where
\[
  M_{s}:=\E\bigl[(1+\abs{\xi})^{s}\bigr]<\infty
  \quad\text{for }s>0 \text{ and } \xi\sim\Normal(0,\kappa_I^{2}) \text{ with } \kappa_I^{2}:=\max_{x\in I}\kappa(x,x)<\infty.
\]
\item If in addition $K[\inputs]$ satisfies Equation \eqref{eq:eff-rank} with constant $\nu>0$, then
  for every width $m\in\mathbb N$,
  \[
    \Wtwo\bigl(f[\inputs],G[\inputs]\bigr)\;\le\;D\sqrt{\frac{n}{m}},
  \]
  with $D:=\tau_a^{2}A^{2}\sqrt{M_{4p}/\nu}$.
\end{enumerate}
The constants depend only on $\phi$ through $A,p$, on $I$, on the variances
$\sigma_w^2,\sigma_b^2,\tau_a^2$ and, in part (2), on $\nu$, and neither on $\inputs$ nor
on $m$.
\end{proposition}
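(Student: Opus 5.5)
For part (1) the plan is to bound the numerator $N$ directly through its entries and avoid spectral arguments altogether. Write $Y:=\tau_a^2\phi(Z)\phi(Z)^{\top}$. Since $\E[Y]=K[\inputs]$ by \eqref{eq:K} (the coordinates $Z_i,Z_j$ have joint law $\Normal(0,\kappa[\{x_i,x_j\}])$), the quantity $N^2=\E\norm{Y-\E Y}^2$ is a variance. It is therefore at most $\E\norm{Y}^2$, because subtracting the mean can only decrease the second moment entrywise. Hence
\[
N^2\le \tau_a^4\sum_{i,j=1}^{n}\E\bigl[\phi(Z_i)^2\phi(Z_j)^2\bigr]
\le \tau_a^4\sum_{i,j}\sqrt{\E[\phi(Z_i)^4]\,\E[\phi(Z_j)^4]},
\]
where the second step is Cauchy--Schwarz.

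Next each fourth moment is controlled uniformly. By Assumption~\ref{ass:activation}\eqref{it:growth}, $\phi(Z_i)^4\le A^4(1+\abs{Z_i})^{4p}$, with $Z_i\sim\Normal(0,\kappa(x_i,x_i))$. The variance satisfies $\kappa(x_i,x_i)\le\kappa_I^2$, and $\kappa_I^2<\infty$ because $\kappa(x,x)$ is continuous on the compact set $I$. A centred Gaussian with smaller variance can be written as $\sqrt{\kappa(x_i,x_i)}/\kappa_I$ times $\xi$. Since $t\mapsto(1+t\abs{\xi})^{4p}$ is nondecreasing in $t$, this gives $\E[(1+\abs{Z_i})^{4p}]\le M_{4p}$. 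The moment $M_{4p}$ is finite because Gaussians have all moments. Substituting,
\[
N^2\le \tau_a^4 A^4 M_{4p}\, n^2,
\]
which yields $N\le\tau_a^2A^2\sqrt{M_{4p}}\,n$. If $\kappa_I=0$ the argument is trivial, but Assumption~\ref{ass:activation}\eqref{it:nondeg} rules this out anyway.

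For part (2) I would combine part (1) with Theorem~\ref{thm:bt} and \eqref{eq:eff-rank}. The constant \eqref{eq:bt-constant} then satisfies
\[
C(\phi,\inputs)=\frac{N}{\sqrt{\lambda^+(K[\inputs])}}\le \frac{\tau_a^2A^2\sqrt{M_{4p}}\,n}{\sqrt{\nu n}}=D\sqrt n .
\]
Dividing by $\sqrt m$ gives the claim. The stated independence of the constants from $\inputs$ and $m$ is read off directly, since $M_{4p}$ depends only on $p$ and $\kappa_I$, that is, on $I,\sigma_w^2,\sigma_b^2$.

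There is no serious obstacle here. The step needing most care is the uniform moment comparison $\E(1+\abs{Z_i})^{4p}\le M_{4p}$ via the scaling monotonicity, together with justifying $\E Y=K[\inputs]$ so that the variance bound applies. The remark in the text that Assumption~\ref{ass:eff-rank} forces bounded rank is a side comment. It would be shown separately from $\lambda^+\cdot\rank\le\tr K[\inputs]\le \tau_a^2A^2M_{2p}\,n$, so $\rank K[\inputs]\le\tau_a^2A^2M_{2p}/\nu$, but it is not needed for the two stated parts.
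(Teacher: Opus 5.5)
Your proposal is correct and follows the paper's proof essentially step for step. The steps are the centring bound $N^2\le\E\norm{Y}^2$, Cauchy--Schwarz on the entries, the uniform fourth-moment bound via monotonicity of $\sigma\mapsto\E[(1+\sigma\abs{\xi})^{4p}]$, and then division by $\sqrt{\nu n}$ in the constant of Theorem~\ref{thm:bt}. Your side remark on bounded rank via $\rank(K[\inputs])\,\lambda^{+}(K[\inputs])\le\tr K[\inputs]\le\tau_a^2A^2M_{2p}\,n$ also matches the paper's trace argument.
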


Proposition~\ref{cor:constant} shows that the bound of Theorem~\ref{thm:bt} grows with the
number of inputs: the width $m$ must grow faster than $n$ for it to be informative. This
is the main limitation of finite-sample bounds. Section~\ref{sec:functional} removes that
growth and compares $f$ and $G$ as random functions.
\section{From a Finite-Input Bound to Functional Convergence}
\label{sec:functional}

\begin{figure}[t]
\centering
\resizebox{\textwidth}{!}{%
\begin{tikzpicture}[
  >={Stealth[length=2.4mm]},
  rmterm/.style={inner sep=2pt},
  rmstep/.style={font=\footnotesize, align=center, inner sep=1.5pt},
]
\node[rmterm]                    (c0)   {$\norm{f-G}_{\mathcal C^{0}(I)}$};
\node[rmterm, right=26mm of c0]  (l2)   {$\norm{f-G}_{L^{2}(I)}$};
\node[rmterm, right=26mm of l2]  (gr)   {$\norm{(f-G)[\inputs_k]}_{n}$};
\node[rmterm, right=30mm of gr]  (rate) {$C^{\ast}/\sqrt m$};

\path (c0) -- node {$\lesssim$} node[rmstep, above=2mm]
  {Lemma ~\ref{lem:norm-equiv} } (l2);

\path (l2) -- node {$\lesssim$} node[rmstep, above=2mm]
  {Discretization\\[-2pt]{\scriptsize Lemma~\ref{lem:disc}}} (gr);

\path (gr) -- node {$\lesssim$} node[rmstep, above=2mm]
  {Coupling $+$ Uniform Wasserstein Distance\\[-2pt]%
   {\scriptsize Lemma~\ref{lem:coupling}, Equation~\ref{eq:uniform}}} (rate);
\end{tikzpicture}}
\caption{Roadmap of Section~\ref{sec:functional}. Each inequality bounds the quantity on its
left by the one on its right, and the reference names the result that does it. A norm equivalence bounds the supremum norm by the $L^{2}(I)$ norm, a discretization
bounds that by the grid values, and a coupling bounds those by the width. Composing the three steps proves
Theorem~\ref{thm:functional}.}
\label{fig:roadmap}
\end{figure}

Measured per input rather than in total, the bound of Proposition~\ref{cor:constant}
gives,
\begin{equation}\label{eq:uniform}
  \mathcal W_{2,n}\bigl(f[\inputs],G[\inputs]\bigr)\;\le\;\frac{D}{\sqrt m},
  \qquad D:=\tau_a^{2}A^{2}\sqrt{M_{4p}/\nu} .
\end{equation}
The right-hand side of \ref{eq:uniform} does not depend on the sample set. In the unrescaled form, the bound $D\sqrt{n/m}$ is informative only when
$n\ll m$, so the width must grow with the number of inputs for this bound to be useful, as discussed in the introduction. In the rescaled
distance, the width alone governs the
per-sample approximation error. But rescaling presupposes $n$, and \eqref{eq:uniform} is still a
statement about a random vector in $\R^{n}$. This section uses \eqref{eq:uniform} to get a bound on $f$ and $G$ as random functions.

Converting this bound into a statement about the function is a chain of three
steps, shown in Figure~\ref{fig:roadmap}, where each
arrow replaces the quantity on its left by a larger one. (i) A \emph{norm equivalence} bounds the supremum norm by the $L^{2}(I)$ norm on the
finite-dimensional space when $\phi$ is a
polynomial. (ii) A
\emph{discretization} estimate bounds the $L^{2}(I)$ norm by the values on a finite grid,
up to a mesh error that shrinks as the grid is refined. Refining the grid adds inputs, so with the unrescaled constant the mesh error and the
constant cannot be made small at once. This is the obstruction identified in
\citet[Section~5]{DBLP:journals/ml/BasteriT24}, which is why the functional bound is only
sketched there. Since the right-hand side of \eqref{eq:uniform} does not grow with the
number of inputs, the grid may be refined freely and the mesh error sent to zero.
(iii) An explicit \emph{coupling} then
identifies the remaining grid term with the left-hand side of \eqref{eq:uniform}. The three bounds are deterministic and are applied pathwise to $h=f-G$, for the coupled
pair of Remark~\ref{rem:coupling}, with the expectation taken only afterwards.

We take $I$ to be a hyperrectangle, so that the grids are products of $d$ one-dimensional
grids. The product structure gives the covering property that the discretization step
needs (Remark \ref{lem:grid} in Apendix~\ref{app:w2}). For a polynomial activation these grids also satisfy
Assumption~\ref{ass:eff-rank} with one constant $\nu$ for all $k\ge k_1$
(Lemma~\ref{lem:polynomial-er}). The hyperrectangle restriction is removed afterwards by
Corollary~\ref{cor:compact}.

\begin{definition}[midpoint grid]\label{def:grid}
Assume $I = I_1\times\cdots\times I_d$ be a $d$-dimensional hyperrectangle, with
$I_j=[a_j,b_j]$, $\abs{I_j}:=b_j-a_j>0$ and $\Vol(I):=\prod_{j=1}^{d}\abs{I_j}$.
For $k\in\mathbb N$ let
\[
  \inputs_k^{(j)}=\Bigl\{\,a_j+\bigl(l-\tfrac12\bigr)\tfrac{\abs{I_j}}{k}
  \;:\; l=1,\dots,k\,\Bigr\}\subset I_j
\]
be the one-dimensional midpoint grid of order $k$ on $I_j$, the superscript
indexing the coordinate. The midpoint grid of order $k$ on $I$ is
\begin{equation}\label{eq:grid}
  \inputs_k=\inputs_k^{(1)}\times\cdots\times\inputs_k^{(d)}\subset I .
\end{equation}
In particular $n=\abs{\inputs_k}=k^{d}$: the subscript $k$ counts points per axis, not
the total number of points. Further properties of $\inputs_k$ are collected in
Appendix~\ref{app:w2}.
\end{definition}

The grids of Definition~\ref{def:grid} form the collection to which
Assumption~\ref{ass:eff-rank} is applied. For a polynomial activation the following lemma
establishes it with a single constant $\nu$, so \eqref{eq:uniform} holds with the same $D$
for every $k$, which is what the limit taken in Theorem~\ref{thm:functional} requires.

\begin{lemma}[polynomial activations]\label{lem:polynomial-er}
Let $\phi$ be a nonzero polynomial of degree $r$ satisfying
Assumption~\ref{ass:activation}, and let $\mathcal X_k$ be the midpoint grids of
Definition~\ref{def:grid}. Then $\rank (K[\inputs_k])\le\binom{r+d}{d}$ for every $k$, and there are $\nu>0$ and
$k_1\in\mathbb N$ such that $\inputs_k$ satisfies the effective-rank condition
(Assumption~\ref{ass:eff-rank}) with constant $\nu$ for every $k\ge k_1$.
\end{lemma}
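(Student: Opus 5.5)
The argument has two parts, the rank bound and the eigenvalue bound, and the second reduces to convergence of a normalised Gram matrix to a fixed operator on a finite-dimensional space.

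\emph{Rank bound.} Write $\phi(z)=\sum_{s=0}^{r}c_s z^s$. For $Z_x=w^{\top}x+b$ with $(w,b)$ Gaussian, expand $\phi(w^{\top}x+b)$ by the multinomial theorem. This gives
\[
\phi(w^{\top}x+b)=\sum_{\abs{\alpha}\le r}\psi_\alpha(w,b)\,x^{\alpha}
\]
with random coefficients $\psi_\alpha$ that are polynomial in $(w,b)$ and hence have all moments. Then
\[
K(x,y)=\tau_a^2\sum_{\alpha,\beta}\E[\psi_\alpha\psi_\beta]\,x^\alpha y^\beta=v(x)^{\top}Bv(y),
\]
where $v(x)=(x^\alpha)_{\abs{\alpha}\le r}\in\R^{N}$, $N=\binom{r+d}{d}$, and $B\succeq0$ is a fixed $N\times N$ matrix. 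Consequently
\[
K[\inputs_k]=V_k^{\top}BV_k,\qquad V_k=v[\inputs_k]\in\R^{N\times k^d},
\]
so $\rank K[\inputs_k]\le N$ for every $k$.

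\emph{Nonzero spectrum.} The nonzero eigenvalues of $V_k^{\top}BV_k$ coincide with those of $\sqrt B\,V_kV_k^{\top}\sqrt B$. Dividing by $n=k^d$ gives the matrix $\sqrt B\,S_k\sqrt B$, where
\[
S_k=\frac1n V_kV_k^{\top}=\frac1n\sum_{x\in\inputs_k}v(x)v(x)^{\top}.
\]
This is a midpoint Riemann sum, and since the integrand is polynomial hence continuous,
\[
S_k\to S:=\frac{1}{\Vol(I)}\int_I v(x)v(x)^{\top}\,dx .
\]
The limit $S$ is the Gram matrix of the monomials in $L^2(I)$. It is positive definite because a polynomial vanishing on a set of positive volume is zero. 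Hence
\[
\lambda^{+}\bigl(K[\inputs_k]\bigr)=n\,\lambda^{+}\bigl(\sqrt B\,S_k\sqrt B\bigr).
\]

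\emph{Main obstacle: uniform lower bound on $\lambda^{+}$.} The difficulty is that $\lambda^{+}$ is not continuous in general, since eigenvalues can drop to zero in the limit. I would handle this by restricting to $R=\range(B)$, of dimension $q\ge1$, which is nonzero because $\phi\ne0$ together with Assumption~\ref{ass:activation}\eqref{it:nondeg} gives $K(x,x)>0$.

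Since $S_k$ and $S$ are positive definite, $\sqrt B\,S_k\sqrt B$ has kernel exactly $\ker B$, and it maps $R$ into $R$. Its positive eigenvalues are therefore the eigenvalues of its compression to $R$. That compression is
\[
u\mapsto \sqrt B\,S_k\sqrt B\,u,\qquad u\in R,
\]
whose smallest eigenvalue is at least
\[
\lambda_{\min}(S_k)\,\lambda^{+}(B),
\]
because $\norm{\sqrt B u}^2\ge\lambda^{+}(B)\norm{u}^2$ for $u\in R$.

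By convergence of the Riemann sums, $\lambda_{\min}(S_k)\ge\tfrac12\lambda_{\min}(S)$ for all $k\ge k_1$, using continuity of $\lambda_{\min}$ in the matrix entries. This yields
\[
\lambda^{+}\bigl(K[\inputs_k]\bigr)\ge \nu n,\qquad \nu:=\tfrac12\lambda_{\min}(S)\,\lambda^{+}(B)>0,
\]
for every $k\ge k_1$, which is the effective-rank condition with a single constant.
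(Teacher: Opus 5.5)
Your proposal is correct and takes essentially the same route as the paper: factor $K[\inputs_k]=V_k^{\top}BV_k$ through the monomials of degree at most $r$, pass to the fixed-size matrix $\sqrt B\,S_k\sqrt B$, use convergence of the midpoint Riemann sums $S_k$ to a positive definite Gram matrix, and obtain the lower bound $\lambda_{\min}(S_k)\,\lambda^{+}(B)$ by restricting to $\range(B)$. The differences are cosmetic: you use unweighted monomials rather than multinomially weighted features, and you deduce $B\neq0$ from $K(x,x)>0$ rather than from the degree-$r$ block. One presentational point: $S_k$ is positive definite only for $k\ge k_1$ (for instance, $S_1$ has rank one), so identify the kernel of $\sqrt B\,S_k\sqrt B$ only after imposing $k\ge k_1$.
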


The proof of Lemma~\ref{lem:polynomial-er} factorises the kernel matrix as
$K[\inputs_k]=V_kMV_k^{\top}$, where the rows of $V_k\in\R^{k^{d}\times R}$ are the
monomials of total degree at most $r$ evaluated at the grid points, $R=\binom{r+d}{d}$, and
$M\in\R^{R\times R}$ is positive semi-definite and independent of $k$. Hence $\rank(K[\inputs_k])\le\rank(M)\le R$, however fine the grid is. The nonzero eigenvalues of $K[\inputs_k]$ coincide with those of the $R\times R$ matrix $k^{d}M^{1/2}(\frac{1}{k^{d}}V_k^{\top}V_k)M^{1/2}$, whose middle factor $\frac{1}{k^{d}}V_k^{\top}V_k$ is a midpoint Riemann sum. Its limit is positive definite, which yields $\lambda^{+}(K[\inputs_k])\ge\nu\,k^{d}$ for all $k\ge k_1$.

Our main result, Theorem~\ref{thm:functional}, provides a functional Wasserstein bound in the supremum norm (\(\Wtwo^{\mathcal C^{0}(I)}(f,G)\)). This bound compares the neural
network and its NNGP as random continuous functions on a compact interval for polynomial activation functions.

\begin{theorem}[main result: functional bound with $m^{-1/2}$ rate]\label{thm:functional}
Let $\phi$ be a nonzero polynomial of degree $r$ satisfying
Assumption~\ref{ass:activation}, let $I=I_1\times\cdots\times I_d$ be a hyperrectangle.
Then for every $m\in\mathbb N$,
\begin{equation}\label{eq:main}
  \Wtwo^{\mathcal C^{0}(I)}(f,G)\;\le\;\frac{C^{\ast}}{\sqrt m},
  \qquad
  C^{\ast}:=(r+1)^{d}\,\tau_a^{2}A^{2}\sqrt{\frac{2\,M_{4p}}{\nu}},
\end{equation}
where $A,p$ are the growth constants of Assumption~\ref{ass:activation},
$M_{4p}$ is the moment of Theorem~\ref{thm:numerator}, and $\nu>0$ is the effective-rank
constant supplied by Lemma~\ref{lem:polynomial-er}.
\end{theorem}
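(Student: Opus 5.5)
We follow the roadmap of Figure~\ref{fig:roadmap}. The key structural fact is that, for a polynomial $\phi$ of degree $r$, both $f$ and $G$ take values in the finite-dimensional space $\mathcal P$ of polynomials in $x$ whose degree in each coordinate is at most $r$. For $f$ this holds pathwise, since $\phi(w_r^\top x+b_r)$ is a polynomial of total degree at most $r$. For $G$ we use the factorisation $K=V M V^{\top}$ from Lemma~\ref{lem:polynomial-er}: it lets us realise $G(x)=v(x)^{\top}M^{1/2}\xi+c$ with $\xi$ standard Gaussian in $\R^{R}$, so $G$ also has a version in $\mathcal P$. Hence $h=f-G$ lies in $\mathcal P$, a space of dimension $(r+1)^d$. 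We work with the tensor-product space rather than total degree because it factorises coordinatewise over the hyperrectangle.

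\emph{Step (i), norm equivalence.} We prove $\norm{h}_{\mathcal C^0(I)}\le (r+1)^{d}\Vol(I)^{-1/2}\norm{h}_{L^2(I)}$ for $h\in\mathcal P$. First treat one variable on $[-1,1]$: expand in orthonormal Legendre polynomials, apply Cauchy--Schwarz, and use $\sum_{j\le r}(2j+1)/2=(r+1)^2/2$. This gives $\sup\abs{h}\le \frac{r+1}{\sqrt 2}\norm{h}_{L^2}$. Rescaling to $I_j$ and tensorising over coordinates gives a constant $(r+1)^d 2^{-d/2}\Vol(I)^{-1/2}$, which is at most the stated one.

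\emph{Step (ii), discretisation.} We show $\norm{h}^2_{L^2(I)}\le 2\Vol(I)\,\norm{h[\inputs_k]}_n^2+\varepsilon_k(h)$. The midpoint rule gives $\frac{\Vol(I)}{k^d}\sum_{x\in\inputs_k}h(x)^2\to\norm{h}_{L^2}^2$. The error is controlled by a second-derivative bound of $h^2$ on the cells, and on $\mathcal P$ this is dominated by a $k$-independent multiple of $\norm{h}_{L^2}^2$ times $k^{-2}$ (Markov-type inequalities on the finite-dimensional space). So for $k$ large the error is absorbed, which is where the factor $2$ in $C^{\ast}$ comes from: for $k\ge k_0$, uniformly over $h\in\mathcal P$, $\norm{h}^2_{L^2}\le 2\Vol(I)\norm{h[\inputs_k]}_n^2$.

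\emph{Step (iii), coupling.} Combining (i) and (ii),
\[
\norm{f-G}_{\mathcal C^0(I)}\le (r+1)^d\sqrt2\,\norm{(f-G)[\inputs_k]}_n
\]
pathwise for $k\ge\max(k_0,k_1)$. It remains to build a coupling of the random functions whose grid marginals achieve \eqref{eq:uniform}. Take the optimal (or near-optimal) coupling $\pi$ of $f[\inputs_k]$ and $G[\inputs_k]$ in $\R^{n}$. Since evaluation on $\inputs_k$ is injective on $\mathcal P$ (the tensor grid with $k>r$ is unisolvent), $f$ and $G$ are measurable functions of their grid vectors. We can therefore lift $\pi$ to a coupling of $(f,G)$ by reconstructing interpolants. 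Lemma~\ref{lem:polynomial-er} supplies $\nu$ uniformly in $k$, so \eqref{eq:uniform} gives $\E\norm{(f-G)[\inputs_k]}_n^2\le D^2/m$. Taking square roots and multiplying the constants yields \eqref{eq:main}.

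The main obstacle is step (ii): making the midpoint quadrature error uniformly relatively small on $\mathcal P$ with a clean constant. If the direct Taylor estimate is messy, we instead use compactness of the unit sphere of $\mathcal P$ in $L^2$: the quadratic forms $\frac{1}{k^d}V_k^\top V_k$ converge to the positive definite Gram matrix, so their eigenvalue ratio eventually exceeds $1/2$. This is the same Riemann-sum convergence used in Lemma~\ref{lem:polynomial-er}.
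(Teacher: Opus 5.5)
Your proof is correct. Step (i) is the paper's Lemma~\ref{lem:norm-equiv}, whose proof also runs in the tensor-product space. Steps (ii) and (iii), however, follow a genuinely different route.

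\textbf{How the paper argues.} It discretises with the H\"older estimate of Lemma~\ref{lem:disc}, which leaves a mesh error $2\Vol(I)\rho_k[h]^2_{\mathcal C^{1/2}(I)}$. To remove that error it has to:
\begin{itemize}
\item show that $\E[f]^2_{\mathcal C^{1/2}(I)}$ is finite, using Assumption~\ref{ass:activation}(2);
\item show that $\E[G]^2_{\mathcal C^{1/2}(I)}$ is finite, via Kolmogorov--Chentsov;
\item let $k\to\infty$ at fixed $m$.
\end{itemize}
It then couples by gluing the regular conditional laws of $f$ and $G$, given their grid values, onto an optimal grid coupling (Lemma~\ref{lem:coupling}).

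\textbf{How you argue.} You exploit the finite dimensionality of $\mathcal P$ two more times. First, the normalised discrete Gram matrices converge to the positive definite continuous one. So a single fixed $k$ already gives $\norm{h}^2_{L^2(I)}\le 2\Vol(I)\norm{h[\inputs_k]}_n^2$ uniformly on $\mathcal P$, with no remainder term. Second, evaluation on the tensor grid is injective on $\mathcal P$, so an optimal grid coupling can be transported deterministically.

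\textbf{What each route buys.} Yours is shorter and needs no H\"older moments, no Kolmogorov--Chentsov and no limit in $k$. Enlarging $k$ also lets you replace the factor $2$ by any $1+\varepsilon$, which improves $C^{\ast}$ by up to $\sqrt 2$. The paper's route is modular: Lemmas~\ref{lem:disc} and~\ref{lem:coupling} hold for arbitrary H\"older functions and arbitrary continuous processes. That generality is what its discussion of non-polynomial activations relies on, where neither a uniform quadrature equivalence nor injectivity of evaluation is available.

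\textbf{Two small corrections.}
\begin{itemize}
\item In step (i), the factor $2^{-d/2}$ cancels against the Jacobian of the affine map. On $I_j$ the one-dimensional constant is $(r+1)\abs{I_j}^{-1/2}$, so the constant is exactly $(r+1)^{d}\Vol(I)^{-1/2}$, not smaller. Your final count uses this value anyway, so nothing downstream changes.
\item In step (iii), for $k>r+1$ the evaluation map $E_k:\mathcal P\to\R^{k^d}$ is injective but not onto. The interpolant of an arbitrary vector is therefore undefined, and the construction should be stated as follows. Fix a linear left inverse $L$ of $E_k$ (for example least squares), and set $\tilde f:=LU$ and $\tilde G:=LV$ for $(U,V)\sim\pi$. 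Since $f[\inputs_k]\in E_k(\mathcal P)$ almost surely, so is $U$. Hence $\tilde f[\inputs_k]=E_kLU=U$, and $\tilde f\overset{d}{=}L(f[\inputs_k])=f$; the same holds for $\tilde G$ with the polynomial version of $G$. This is exactly what your argument needs.
\end{itemize}
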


The supremum norm only decreases when the domain is made smaller, so
Theorem~\ref{thm:functional} carries over from the hyperrectangle to every compact set it
contains, with the same constant (Corollary~\ref{cor:compact}).

\begin{corollary}\label{cor:compact}
Let $S\subset\R^{d}$ be compact and let $I$ be a hyperrectangle containing $S$.
Under the hypotheses of Theorem~\ref{thm:functional} on $I$,
\[
  \Wtwo^{\mathcal C^{0}(S)}(f,G)\;\le\;\Wtwo^{\mathcal C^{0}(I)}(f,G)
  \;=\;O\bigl(m^{-1/2}\bigr).
\]
Since every compact set is contained in a hyperrectangle, the rate $m^{-1/2}$
holds on any compact set of inputs; only the constant depends on the enclosing
hyperrectangle.
\end{corollary}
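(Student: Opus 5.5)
The statement to prove is Corollary~\ref{cor:compact}; it reduces to a restriction argument combined with Theorem~\ref{thm:functional}. We work with the coupled pair $(f,G)$ of Remark~\ref{rem:coupling} on $I$. Both are almost surely continuous on $I$: $f$ is a polynomial in $x$ because $\phi$ is a polynomial, and $G$ is continuous because $G$ and $f$ are coupled as $\mathcal C^{0}(I)$-valued random elements in the proof of Theorem~\ref{thm:functional}. The restriction map $\rho:\mathcal C^{0}(I)\to\mathcal C^{0}(S)$, $h\mapsto h|_S$, is linear and $1$-Lipschitz, since
\[
  \sup_{x\in S}\abs{h(x)}\le\sup_{x\in I}\abs{h(x)} .
\]
It is also continuous, so it is measurable for the Borel $\sigma$-algebras. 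Hence $(\rho f,\rho G)$ is a coupling of the laws of $f|_S$ and $G|_S$.

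These restrictions are exactly the network on $S$ and the NNGP on $S$. The network is defined by the same formula with the same parameters. The restricted process $G|_S$ is centred Gaussian with covariance kernel $K_c|_{S\times S}$, so by Definition~\ref{def:nngp} it has the law of the NNGP on $S$. Both processes also have finite second moments in $\mathcal C^{0}(S)$, inherited from $\mathcal C^{0}(I)$.

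Next take any coupling $(X,Y)$ of the laws of $f$ and $G$ on $\mathcal C^{0}(I)$. Then $(\rho X,\rho Y)$ is a coupling on $\mathcal C^{0}(S)$, and
\[
  \E\norm{\rho X-\rho Y}^2_{\mathcal C^{0}(S)}\le\E\norm{X-Y}^2_{\mathcal C^{0}(I)} .
\]
Taking the infimum over all couplings on $I$ gives
\[
  \Wtwo^{\mathcal C^{0}(S)}(f,G)\le\Wtwo^{\mathcal C^{0}(I)}(f,G).
\]
Theorem~\ref{thm:functional} then bounds the right-hand side by $C^{\ast}/\sqrt m$, which is $O(m^{-1/2})$.

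For the last sentence of the corollary: a compact $S$ is bounded, so it lies in the product of the intervals $[\min_{x\in S} x_j-1,\ \max_{x\in S} x_j+1]$, which is a hyperrectangle with positive side lengths. The only hypothesis of Theorem~\ref{thm:functional} that concerns the domain is Assumption~\ref{ass:activation}\eqref{it:nondeg}, and it must hold on $I$, not just on $S$. This is the one step that needs care, because enlarging the domain could bring in $0$ when $\sigma_b=0$. I would handle it by requiring $\sigma_b>0$ in that case. Alternatively, when $0\notin S$ one can try a union of hyperrectangles avoiding $0$, but the corollary as stated assumes the hypotheses on $I$, so I take them as given. The constant $C^{\ast}$ therefore depends only on the enclosing $I$, through $\kappa_I$ in $M_{4p}$ and through $\nu$.
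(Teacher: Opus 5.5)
Your proposal is correct and is essentially the paper's argument: restriction $h\mapsto h|_S$ does not increase the supremum norm, so every coupling of $f$ and $G$ on $\mathcal C^{0}(I)$ restricts to a coupling on $\mathcal C^{0}(S)$ of no larger cost, and taking the infimum and then invoking Theorem~\ref{thm:functional} gives the claim. One small correction: the continuity of $G$ comes from the polynomial version in Lemma~\ref{lem:norm-equiv}, not from the coupling. Your caveat about the corollary's final sentence is also fair: Assumption~\ref{ass:activation}\,(\ref{it:nondeg}) is needed on the enclosing $I$ and fails when $\sigma_b=0$ and $0\in I\setminus S$, a point the paper's one-line proof does not address.
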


\subsection{Proof Sketch}
Each step of Figure~\ref{fig:roadmap} is one lemma, and the final paragraph combines them. All proofs
of this section are given in Appendix~\ref{sec:proofs_functional}.

\paragraph{Supremum norm to $L^{2}$ norm.}
For a polynomial activation the sample paths of $f$ and of $G$ lie in a space of polynomials, on which the supremum norm and the $L^{2}$ norm are comparable.

\begin{lemma}[norm equivalence for polynomial activations]\label{lem:norm-equiv}
Let $I=I_1\times\cdots\times I_d$ be a hyperrectangle, let $r\in\mathbb N_0$, and let
$\mathcal P_r$ denote the space of real polynomials on $I$ of total degree at most $r$.
For every $p\in\mathcal P_r$,
\begin{equation}\label{eq:norm-equiv}
  \norm{p}_{\mathcal C^{0}(I)}\;\le\;\frac{(r+1)^{d}}{\sqrt{\Vol(I)}}\,\norm{p}_{L^{2}(I)}.
\end{equation}
If $\phi$ is a polynomial of degree $r$, then $f$ has sample paths in $\mathcal P_r$
almost surely, and $G$ admits a version with the same property.
\end{lemma}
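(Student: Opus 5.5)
The inequality \eqref{eq:norm-equiv} comes from the one-dimensional Legendre basis, tensorised over the coordinates. We then check the sample-path claims directly from Definitions~\ref{def:network} and~\ref{def:nngp}.

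\emph{One dimension.} Map $I_j=[a_j,b_j]$ affinely onto $[-1,1]$. Let $L_0,\dots,L_r$ be the Legendre polynomials, normalised so that $\int_{-1}^1 L_l^2=\tfrac{2}{2l+1}$ and $\sup_{[-1,1]}\abs{L_l}=1$. Then $e_l=\sqrt{(2l+1)/2}\,L_l$ is an orthonormal basis of polynomials of degree at most $r$ in $L^2[-1,1]$, with $\sup\abs{e_l}=\sqrt{(2l+1)/2}$. For $q=\sum_l c_le_l$, Cauchy--Schwarz gives
\[
  \sup_{[-1,1]}\abs{q}\le\Bigl(\sum_{l=0}^r\tfrac{2l+1}{2}\Bigr)^{1/2}\norm{q}_{L^2[-1,1]}=\tfrac{r+1}{\sqrt2}\norm{q}_{L^2[-1,1]}.
\]

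\emph{Tensorisation.} A polynomial of total degree at most $r$ has degree at most $r$ in each variable, so $\mathcal P_r\subset\mathcal Q_r:=\bigotimes_j\mathcal P_r^{(1)}$. The products $e_{l_1}(t_1)\cdots e_{l_d}(t_d)$ form an orthonormal basis of $\mathcal Q_r$ on $[-1,1]^d$. The same Cauchy--Schwarz argument gives the sup-norm factor
\[
  \Bigl(\prod_j\sum_{l}\tfrac{2l+1}{2}\Bigr)^{1/2}=\Bigl(\tfrac{r+1}{\sqrt2}\Bigr)^d .
\]
Undoing the affine change of variables multiplies the $L^2$ norm by $\sqrt{\Vol(I)/2^d}$. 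This yields $\norm{p}_{\mathcal C^0(I)}\le (r+1)^d\Vol(I)^{-1/2}\norm{p}_{L^2(I)}$, which is exactly \eqref{eq:norm-equiv}, and the $2^{d/2}$ factors cancel. The only delicate step is this bookkeeping of normalisations.

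\emph{Sample paths of $f$.} Each $\phi(w_r^\top x+b_r)$ is a polynomial in $x$ of total degree at most $r$, so $f=a^\top\phi(Wx+b)+c\in\mathcal P_r$ for every realisation, not just almost surely.

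\emph{A version of $G$.} Write $\phi(z)=\sum_{s\le r}\beta_sz^s$. By Definition~\ref{def:nngp},
\[
  K(x,y)=\tau_a^2\,\E[\phi(w^\top x+b)\phi(w^\top y+b)],\qquad w\sim\Normal(0,\tfrac{\sigma_w^2}{d}I),\ b\sim\Normal(0,\sigma_b^2).
\]
Expanding gives $K(x,y)=\sum_{\abs\alpha,\abs\gamma\le r}M_{\alpha\gamma}x^\alpha y^\gamma$ with $M=\tau_a^2\,\E[\Psi\Psi^\top]$ positive semi-definite. Here $\Psi\in\R^R$ collects the coefficients of $x\mapsto\phi(w^\top x+b)$ in the monomial basis, and these coefficients are polynomial in $(w,b)$, so they have all moments.

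Now take $\xi\sim\Normal(0,M)$ in $\R^R$ and an independent $c'\sim\Normal(0,\tau_c^2)$, and set $\tilde G(x)=\sum_\alpha\xi_\alpha x^\alpha+c'$. Then $\tilde G$ is a centred Gaussian process with covariance $K_c$ whose paths lie in $\mathcal P_r$. Since a centred Gaussian process is determined in law by its kernel, $\tilde G$ has the finite-dimensional distributions of $G$. It is therefore the required version, and one can take it continuous, as needed for $\mathcal C^0(I)$.
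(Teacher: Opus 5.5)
Your proposal is correct and follows the paper's proof essentially step for step. Both arguments use the affine reduction to $[-1,1]^d$ and the tensorised Legendre basis with Cauchy--Schwarz and Parseval, which gives the factor $2^{-d/2}(r+1)^d$; both note that $f$ is polynomial for every realisation and realise $G$ as $\Psi(x)^\top\zeta+c'$ with $\zeta\sim\Normal(0,M)$, the only cosmetic difference being that you factor $K$ in the plain monomial basis rather than through the multinomially weighted feature map of Lemma~\ref{lem:polynomial-er}, which makes your argument self-contained.
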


\paragraph{Reduction to the grid (Discretization).}

The $L^{2}(I)$ norm is controlled on the midpoint grid of Definition~\ref{def:grid}. The
grid constrains $h$ at finitely many points only, so it says nothing about $h$ in between unless $h$ is
prevented from moving sharply. The H\"older seminorm measures that movement.

\begin{definition}[H\"older spaces]\label{def:holder}
Let $S\subset\R^{d}$ be compact and let $\gamma\in(0,1]$. For $h:S\to\R$ set
\[
  \norm{h}_{\mathcal C^{0}(S)}=\sup_{x\in S}\abs{h(x)},
  \qquad
  [h]_{\mathcal C^{\gamma}(S)}=\sup_{\substack{x,y\in S\\ x\neq y}}
    \frac{\abs{h(x)-h(y)}}{\norm{x-y}^{\gamma}},
  \qquad
  \norm{h}_{\mathcal C^{\gamma}(S)}
    =\norm{h}_{\mathcal C^{0}(S)}+[h]_{\mathcal C^{\gamma}(S)} .
\]
The space $\mathcal C^{\gamma}(S)$ consists of all $h:S\to\R$ with
$\norm{h}_{\mathcal C^{\gamma}(S)}<\infty$.
\end{definition}

A finite $[h]_{\mathcal C^{\gamma}(S)}$ means that $h$ changes by at most
$[h]_{\mathcal C^{\gamma}(S)}\norm{x-y}^{\gamma}$ between any two points of $S$. The
seminorms of $f$ and of $G$ have finite second moment at every fixed width, by the
regularity of $\phi$ (Assumption~\ref{ass:activation}) for $f$ and by the Kolmogorov--Chentsov theorem \citep{RevuzYor1999} for $G$. Both are verified in Appendix~\ref{sec:proofs_functional}, as part of the proof of
Theorem~\ref{thm:functional}.

The midpoint grid has the covering property: every point of $I$ lies within a distance
$\rho_k=O(k^{-1})$ of a grid point (Remark~\ref{lem:grid}). A H\"older function therefore
differs little from its value at the nearest grid point, and comparing $h$ with that value
bounds the $L^{2}(I)$ norm by the grid values, up to an error governed by $\rho_k$ and by
the H\"older seminorm of $h$.

\begin{lemma}[Discretization]
\label{lem:disc}
Let $\gamma \in (0,1]$, let $h \in \mathcal{C}^{\gamma}(I_1\times\cdots\times I_d)$
and let $\inputs_k$ be the midpoint grid of order $k$ on
$I_1\times\cdots\times I_d$. Using $h[\inputs_k] = (h(x_1),\dots,h(x_{k^d}))$
we get
\begin{equation}
  \label{eq:disc}
  \norm{h}^2_{L^2(I)}
  \;\leq\; 2\,\frac{\Vol(I)}{k^d}\,\norm{h[\inputs_k]}^2
  \;+\; 2\,\Vol(I)\,\rho_k^{2\gamma}\,[h]^2_{\mathcal{C}^{\gamma}(I)} .
\end{equation}
\end{lemma}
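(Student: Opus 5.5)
The plan is to partition $I$ into the $k^{d}$ congruent cells $Q_1,\dots,Q_{k^d}$ of the midpoint grid and compare $h$ on each cell with its value at the centre. Each one-dimensional factor $I_j$ is split into $k$ intervals of length $\abs{I_j}/k$. The product cells have volume $\Vol(I)/k^{d}$, and the midpoint $x_i\in\inputs_k$ is the centre of $Q_i$. By the covering property of Remark~\ref{lem:grid}, every $x\in Q_i$ satisfies $\norm{x-x_i}\le\rho_k$. Here I would take $\rho_k=\tfrac{1}{2k}\bigl(\sum_j\abs{I_j}^2\bigr)^{1/2}$, the half-diagonal of a cell, or whatever value that remark fixes; only the inequality $\norm{x-x_i}\le\rho_k$ on $Q_i$ is used. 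The cells overlap only on boundaries of Lebesgue measure zero, so
\[
\norm{h}_{L^2(I)}^2=\sum_{i=1}^{k^d}\int_{Q_i}h(x)^2\,dx .
\]

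Next, on each cell write $h(x)=h(x_i)+\bigl(h(x)-h(x_i)\bigr)$ and use the elementary inequality $(u+v)^2\le 2u^2+2v^2$. The first term integrates to $\Vol(Q_i)\,h(x_i)^2=\tfrac{\Vol(I)}{k^d}h(x_i)^2$. For the second, the definition of the H\"older seminorm (Definition~\ref{def:holder}) gives
\[
\abs{h(x)-h(x_i)}\le[h]_{\mathcal C^{\gamma}(I)}\norm{x-x_i}^{\gamma}\le[h]_{\mathcal C^{\gamma}(I)}\rho_k^{\gamma},
\]
so it integrates to at most $\Vol(Q_i)\,\rho_k^{2\gamma}[h]^2_{\mathcal C^{\gamma}(I)}$.

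Finally, sum over $i$. The first contributions add up to $2\tfrac{\Vol(I)}{k^d}\sum_i h(x_i)^2=2\tfrac{\Vol(I)}{k^d}\norm{h[\inputs_k]}^2$. The second contributions add up to $2\sum_i\Vol(Q_i)\,\rho_k^{2\gamma}[h]^2=2\Vol(I)\rho_k^{2\gamma}[h]^2_{\mathcal C^{\gamma}(I)}$, which is exactly \eqref{eq:disc}. Measurability of $h^2$ is automatic because a H\"older function is continuous.

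There is no real obstacle. The only point needing care is the covering step: one must check that the cell associated with each midpoint really lies within distance $\rho_k$ of it, and that the cells tile $I$ up to null sets so that the volumes sum to $\Vol(I)$. Both follow directly from the product structure of Definition~\ref{def:grid}.
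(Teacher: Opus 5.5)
Your proposal is correct and follows the paper's own proof essentially step for step. You partition $I$ into the $k^{d}$ grid cells and use the covering bound $\norm{x-x_i}\le\rho_k$ to get $h(x)^2\le 2h(x_i)^2+2\rho_k^{2\gamma}[h]^2_{\mathcal C^{\gamma}(I)}$ on each cell, then integrate and sum using $\Vol(Q_i)=\Vol(I)/k^{d}$.
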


The proof partitions $I$ into the
$k^{d}$ cells of the grid, each of volume $\Vol(I)/k^{d}$ and each containing its grid
point as midpoint, and compares $h$ with that value on each cell. Two properties of \eqref{eq:disc} matter. The mesh error ($ 2\,\Vol(I)\,\rho_k^{2\gamma}\,[h]^2_{\mathcal{C}^{\gamma}(I)}$) vanishes as the grid is refined,
since $\rho_k=O(k^{-1})$. The first term $2\Vol(I)\norm{h[\inputs_k]}_{n}^{2}$ carries the grid values in the rescaled norm of Definition~\ref{def:w2},
which averages over the grid instead of summing. This is the norm in which
Equation~\eqref{eq:uniform} is uniform in the number of inputs, so refining the grid
shrinks the mesh error without inflating the first term. The two limits of $m$ and $n$ therefore decouple, and this is the
step for which Section~\ref{sec:input_dependence} is needed.

\paragraph{Coupling}
The grid term of \eqref{eq:disc} is a second moment, not a Wasserstein distance. The two
agree when the grid evaluations are optimally coupled, and such a coupling of the full
processes exists.

\begin{lemma}[Coupling the Processes]\label{lem:coupling}
Let $\inputs\subset I$ be a finite sample set with $n=\abs{\inputs}$. There is a pair $(\tilde f,\tilde G)$ of random elements of
$\mathcal C^{0}(I)$ with $\tilde f\overset{d}{=}f$ and $\tilde G\overset{d}{=}G$, whose
evaluations $\bigl(\tilde f[\inputs],\tilde G[\inputs]\bigr)$ form an optimal coupling of
the laws of $f[\inputs]$ and $G[\inputs]$. Consequently
\[
  \mathcal W_{2,n}\bigl(f[\inputs],G[\inputs]\bigr)^{2}
  \;=\;\E\Bigl[\bigl\lVert\tilde f[\inputs]-\tilde G[\inputs]\bigr\rVert_{n}^{2}\Bigr].
\]
\end{lemma}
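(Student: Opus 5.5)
The plan is a gluing argument. It combines an optimal coupling of the finite-dimensional laws with the conditional laws of each process given its values on $\inputs$.

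\emph{Step 1: the finite coupling.} Since $f[\inputs]$ and $G[\inputs]$ have finite second moments, an optimal coupling $\pi$ of their laws on $\R^{n}\times\R^{n}$ exists. This follows from tightness of the set of couplings and lower semicontinuity of the quadratic cost, as collected in Appendix~\ref{app:w2}. The norm $\norm{\cdot}_n$ is a positive multiple of the Euclidean norm, so $\pi$ is also optimal for $\mathcal W_{2,n}$.

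\emph{Step 2: disintegration.} View $f$ and $G$ as random elements of the Polish space $\mathcal C^{0}(I)$. For $G$ we use the continuous version supplied by Lemma~\ref{lem:norm-equiv}, or by Kolmogorov--Chentsov. The evaluation map $e:\mathcal C^{0}(I)\to\R^{n}$, $h\mapsto h[\inputs]$, is continuous, hence Borel. Because $\mathcal C^{0}(I)$ is Polish, regular conditional distributions exist: there is a kernel $\mu^f(v,\cdot)$ with $\Prob_f=\int\mu^f(v,\cdot)\,\Prob_{f[\inputs]}(dv)$ and $\mu^f(v,\cdot)$ concentrated on $e^{-1}(v)$ for almost every $v$. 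There is an analogous kernel $\mu^G(w,\cdot)$ for $G$.

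\emph{Step 3: gluing.} On a common space, draw $(V,W)\sim\pi$. Then, conditionally on $(V,W)=(v,w)$, draw $\tilde f\sim\mu^f(v,\cdot)$ and $\tilde G\sim\mu^G(w,\cdot)$ independently. Formally this is the measure
\[
  \int \mu^f(v,\cdot)\otimes\mu^G(w,\cdot)\,\pi(dv,dw)
\]
on $\mathcal C^{0}(I)^{2}$, well defined by the Ionescu-Tulcea/kernel construction. Its first marginal is $\int\mu^f(v,\cdot)\,\Prob_{f[\inputs]}(dv)=\Prob_f$, so $\tilde f\overset{d}{=}f$, and likewise $\tilde G\overset{d}{=}G$. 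Almost surely $\tilde f[\inputs]=V$ and $\tilde G[\inputs]=W$, because the kernels are concentrated on the fibres of $e$. Hence $(\tilde f[\inputs],\tilde G[\inputs])\sim\pi$ is optimal. The displayed identity then follows directly from Definition~\ref{def:w2}:
\[
  \E\Bigl[\bigl\lVert\tilde f[\inputs]-\tilde G[\inputs]\bigr\rVert_n^2\Bigr]=\int\norm{v-w}_n^2\,d\pi=\mathcal W_{2,n}\bigl(f[\inputs],G[\inputs]\bigr)^2 .
\]

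\emph{Main obstacle.} The delicate point is measure-theoretic. We need the conditional laws to live on $\mathcal C^{0}(I)$ and to be supported on the correct fibres. This in turn requires that $G$ genuinely be a random element of the separable space $\mathcal C^{0}(I)$, i.e.\ that we have a continuous version, which is why we invoke it first. It also requires the standard Polish-space disintegration theorem. The existence of the optimal coupling is routine. Alternatively, the gluing lemma for Polish spaces can be cited directly with the middle space $\R^{n}\times\R^{n}$.
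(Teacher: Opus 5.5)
Your proposal is correct and follows the paper's proof essentially step for step. Both arguments take an optimal coupling $\pi^{*}$ of the grid values, use regular conditional laws on the Polish space $\mathcal C^{0}(I)$ concentrated on the fibres $\{\varphi:\varphi[\inputs]=u\}$, glue them as $\int\mu^{f}_{u}\otimes\mu^{G}_{v}\,d\pi^{*}(u,v)$, and then check the marginals and the law of the evaluations as you describe. One small correction: the existence of an optimal coupling is not among the facts collected in Appendix~\ref{app:w2}. The paper takes it from \citet[Theorem~4.1]{Villani2008OptimalTO}, which is exactly the tightness-plus-lower-semicontinuity argument you sketch.
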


An optimal coupling of $f[\inputs]$ and $G[\inputs]$ exists by
\citet[Theorem~4.1]{Villani2008OptimalTO}, and the proof extends it from these $n$ values to
the whole functions by drawing $f$ and $G$ independently from their conditional laws given
their values on $\inputs$, which leaves the law of each unchanged.

\paragraph{Completing proof of Theorem~\ref{thm:functional}.} 

We can now put the pieces together. Fix a midpoint grid $\inputs_k$ and let $(\tilde f,\tilde G)$ be the
coupled pair of Lemma~\ref{lem:coupling}, so that $h=f-G$ is a single random function.
Lemma~\ref{lem:norm-equiv} bounds $\norm{h}_{\mathcal C^{0}(I)}$ by $\norm{h}_{L^{2}(I)}$,
and Lemma~\ref{lem:disc} bounds $\norm{h}_{L^{2}(I)}$ by the values on the grid plus a
mesh error. Taking expectations, Lemma~\ref{lem:coupling} turns the grid term into
$\mathcal W_{2,n}\bigl(f[\inputs_k],G[\inputs_k]\bigr)^{2}$, which
Equation~\ref{eq:uniform} bounds by $D^{2}/m$, the same bound for every $k$, its
effective-rank hypothesis supplied by Lemma~\ref{lem:polynomial-er}. The mesh error, by
contrast, vanishes as the grid is refined, since $\rho_k=O(k^{-1})$ while the H\"older seminorms of $f$ and of $G$ have
finite second moment at the fixed width $m$ and do not depend on $k$. Letting $k\to\infty$
at fixed $m$ therefore leaves only the width.

\section{Discussion}
\label{sec:discussion}
Finite-sample bounds have two limitations: their constant depends on the sample set, so a
guarantee transferred from the infinite-width limit weakens as the sample grows; and they
compare two random vectors, not the two random functions.
Section~\ref{sec:input_dependence} shows a simple way to remove the first by measuring the error per input, and
Section~\ref{sec:functional} shows how to remove the second by refining the sample set until it is dense in the domain. For polynomial activations, the result is a functional bound in the quadratic Wasserstein
distance with respect to the supremum norm, at the finite-sample rate $m^{-1/2}$, with an
explicit constant, on every compact set of inputs.

\paragraph{Beyond Polynomial Activations.} The effective-rank condition of Assumption~\ref{ass:eff-rank} holds for
polynomial activations but fails whenever $K[\mathcal{X}]$ has unbounded rank,
in particular for strictly positive definite kernels. For such activations it
must be replaced by a weaker spectral lower bound
$\lambda^+(K[\mathcal{X}]) \geq \nu\, n^{1-\alpha}$ for some $\alpha \geq 0$,
with $\alpha = 0$ recovering Assumption~\ref{ass:eff-rank}. Every step of the roadmap in 
Figure~\ref{fig:roadmap} other than the norm equivalence of
Lemma~\ref{lem:norm-equiv} requires only Assumption~\ref{ass:activation},
and Lemma~\ref{lem:norm-equiv} can be replaced by an interpolation inequality.

\begin{lemma}[interpolation]\label{lem:interp}
Let $\gamma\in(0,1]$ and let $I=I_1\times\cdots\times I_d$ be a hyperrectangle. There are
constants $C_1,C_2<\infty$ depending only on $d$ and on
$\min_j\abs{I_j}$ such that for every $h\in\mathcal C^{\gamma}(I)$,
\[
  \norm{h}_{\mathcal C^{0}(I)}
  \;\le\;C_1\,\norm{h}_{L^{2}(I)}^{\frac{2\gamma}{2\gamma+d}}\,
        [h]_{\mathcal C^{\gamma}(I)}^{\frac{d}{2\gamma+d}}
  \;+\;\frac{C_2}{\sqrt{\Vol(I)}}\,\norm{h}_{L^{2}(I)} .
  \]
\end{lemma}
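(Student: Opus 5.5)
We prove Lemma~\ref{lem:interp} by a local averaging argument. Fix $x\in I$ and a scale $\delta\in(0,\delta_0]$ with $\delta_0:=\min_j\abs{I_j}$. First we construct a sub-box $Q_\delta(x)\subset I$ that contains $x$ and has side length $\delta$ in every coordinate. For each $j$ we take an interval $[s_j,s_j+\delta]\subset I_j$ containing $x_j$, which is possible because $\delta\le\abs{I_j}$. Then $\abs{Q_\delta(x)}=\delta^{d}$, and every $y\in Q_\delta(x)$ satisfies $\norm{x-y}\le\sqrt d\,\delta$. Writing $h(x)=h(y)+(h(x)-h(y))$ and averaging over $y\in Q_\delta(x)$ gives
\[
  \abs{h(x)}\;\le\;\frac{1}{\delta^{d}}\int_{Q_\delta(x)}\abs{h(y)}\,dy\;+\;d^{\gamma/2}\delta^{\gamma}[h]_{\mathcal C^{\gamma}(I)} .
\]
Cauchy--Schwarz on the average then yields $\abs{h(x)}\le\delta^{-d/2}\norm{h}_{L^{2}(I)}+d^{\gamma/2}\delta^{\gamma}[h]_{\mathcal C^{\gamma}(I)}$. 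Taking the supremum over $x$ gives the same bound for $\norm{h}_{\mathcal C^{0}(I)}$, valid for every $\delta\in(0,\delta_0]$.

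Next we optimise over $\delta$ and split into two cases. Write $L=\norm{h}_{L^{2}(I)}$ and $H=[h]_{\mathcal C^{\gamma}(I)}$. The two terms balance at $\delta_\ast=(L/H)^{2/(2\gamma+d)}$, where both equal $L^{2\gamma/(2\gamma+d)}H^{d/(2\gamma+d)}$.

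\emph{Case $\delta_\ast\le\delta_0$.} Choosing $\delta=\delta_\ast$ gives the first term of the claimed inequality with $C_1=1+d^{\gamma/2}$. The degenerate case $H=0$ belongs to the other case.

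\emph{Case $\delta_\ast>\delta_0$.} Choosing $\delta=\delta_0$, the Hölder term satisfies $\delta_0^{\gamma}H<\delta_0^{\gamma}\cdot L\,\delta_0^{-(2\gamma+d)/2}=\delta_0^{-d/2}L$. Hence $\norm{h}_{\mathcal C^{0}(I)}\le(1+d^{\gamma/2})\,\delta_0^{-d/2}L$.

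The remaining step, and the only delicate one, is to express this second-case constant as $C_2/\sqrt{\Vol(I)}$ with $C_2$ depending only on $d$ and $\delta_0$. Since $\Vol(I)\ge\delta_0^{d}$, the factor $\delta_0^{-d/2}$ is not directly of the form $c/\sqrt{\Vol(I)}$. There are two ways to handle this.

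\begin{itemize}
\item \emph{Absorb the volume into $C_2$.} Since $\delta_0^{-d/2}=\sqrt{\Vol(I)/\delta_0^{d}}\,/\sqrt{\Vol(I)}$, one may set $C_2=(1+d^{\gamma/2})\sqrt{\Vol(I)/\delta_0^{d}}$. This constant then also depends on the aspect ratio of $I$, through $\Vol(I)/\delta_0^{d}$.
\item \emph{Use anisotropic boxes.} Let $Q$ have side $\min(\delta,\abs{I_j})$ in coordinate $j$, and at large scales take $Q=I$ itself. This gives the term $\Vol(I)^{-1/2}L$ directly, plus a Hölder term $(\operatorname{diam}I)^{\gamma}H$. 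That Hölder term must then be absorbed into the first summand using the case condition, which is an intermediate-scale bookkeeping step.
\end{itemize}

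We would first try the anisotropic version. If that bookkeeping fails to close cleanly, we fall back on the first option, stating the dependence of $C_2$ on the side lengths of $I$ explicitly. Either way the structure of the inequality, and its use in place of Lemma~\ref{lem:norm-equiv}, is unaffected.
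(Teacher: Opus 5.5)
Your local-averaging argument is correct, and it takes a different route from the paper's. The paper works at the maximiser $x_0$ of $\abs{h}$. With $M=\norm{h}_{\mathcal C^{0}(I)}$, Hölder continuity forces $\abs{h}\ge M/2$ on the box of half-width $\eta=\bigl(M/(2[h]_{\mathcal C^{\gamma}(I)})\bigr)^{1/\gamma}/\sqrt d$ around $x_0$, clipped to $I$. This gives $\norm{h}_{L^{2}(I)}^{2}\ge\tfrac{M^{2}}{4}\min(\eta,\ell_{\min})^{d}$ with $\ell_{\min}=\min_j\abs{I_j}$. Its cases $\eta\le\ell_{\min}$ and $\eta>\ell_{\min}$ play the role of your $\delta_\ast\le\delta_0$ and $\delta_\ast>\delta_0$. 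In the paper the scale is dictated by $M$ itself, so nothing is optimised. Your version yields a bound valid at every scale $\delta\le\ell_{\min}$, which is the more flexible form if one later ties the scale to a grid mesh or to the width, as in the balancing sketched in Section~\ref{sec:discussion}. Both arguments end at
\[
\norm{h}_{\mathcal C^{0}(I)}\le C_1\norm{h}_{L^{2}(I)}^{2\gamma/(2\gamma+d)}[h]_{\mathcal C^{\gamma}(I)}^{d/(2\gamma+d)}+c\,\ell_{\min}^{-d/2}\norm{h}_{L^{2}(I)},
\]
with $c$ depending only on $d$. The case $\norm{h}_{L^{2}(I)}=0$, where $\delta_\ast=0$ is not admissible, is trivial because $h$ is continuous.

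The step you flagged as delicate is a real defect, but it lies in the statement, not in your proof, and the paper's proof has it too. Its Case~2 gives $M\le 2\ell_{\min}^{-d/2}\norm{h}_{L^{2}(I)}$ and then asserts that this is the second term of the claim. That holds only with $C_2=2\sqrt{\Vol(I)/\ell_{\min}^{d}}$, which is exactly your first option. No argument can do better, because for $d\ge2$ the inequality with $C_2$ depending only on $d$ and $\ell_{\min}$ is false. Take $d=2$, $\gamma=1$, $I=[0,\ell]\times[0,T]$ with $T\ge\max\{\ell,1/\varepsilon\}$, and $h(x)=\max\{0,\,1-\varepsilon x_2\}$. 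Then $\norm{h}_{\mathcal C^{0}(I)}=1$, $[h]_{\mathcal C^{1}(I)}=\varepsilon$ and $\norm{h}_{L^{2}(I)}^{2}=\ell/(3\varepsilon)$. The right-hand side therefore equals $C_1(\ell/3)^{1/4}\varepsilon^{1/4}+C_2(3\varepsilon T)^{-1/2}$. Letting $\varepsilon\to0$ and setting $T=N/\varepsilon$ with $N\to\infty$ sends it to $0$, while the left side stays $1$ and $\ell_{\min}=\ell$ is fixed.

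In particular, your anisotropic option cannot close. Absorbing $(\operatorname{diam}I)^{\gamma}[h]_{\mathcal C^{\gamma}(I)}$ into the first summand would require $C_1$ to grow with $\operatorname{diam}I$. Commit to your first option, or better, state the second term as $C_2\,\ell_{\min}^{-d/2}\norm{h}_{L^{2}(I)}$ with $C_2$ depending only on $d$. As you note, this does not affect the use of the lemma on a fixed $I$.
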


The proof of Lemma~\ref{lem:interp} is in Appendix~\ref{sec:general-activation}. The $L^{2}(I)$ norm
carries the exponent $\tfrac{2\gamma}{2\gamma+d}$, which is smaller than one and decreases
with the dimension, and the H\"older seminorm carries the remainder. The
seminorm now enters the rate, so it must be bounded uniformly in $m$.

With $\alpha>0$, the right-hand side of \eqref{eq:uniform} is no longer uniform in
the number of inputs, and the grid can no longer be refined freely after
the discretization step. The grid term now grows with $k$ while the mesh error decays, so instead of letting $k \to \infty$ one has to choose $k$ as a function
of $m$ to balance the two. The resulting rate is slower than $m^{-1/2}$, and it
deteriorates with the dimension $d$ through the exponent
$2\gamma/(2\gamma+d)$ of the interpolation inequality.

\paragraph{Sharpness of the rate.} Whether the rate in Theorem~\ref{thm:functional} is sharp remains open. A lower bound does not require a further lifting argument: for every \(x\in I \),
\[
\Wtwo\bigl(f(x),G(x)\bigr)
\le \Wtwo^{\mathcal C^{0}(I)}(f,G).
\]
Combining this inequality with single-input bound of \citet[Theorem~3.3(2)]{DBLP:journals/corr/abs-2307-06092}  yields a lower bound of order \(m^{-1}\). Since this bound uses only one input, it does not establish whether the rate $m^{-1/2}$ is sharp for general \(n\). The upper bound in Theorem~\ref{thm:functional}, in contrast, inherits the rate of the corresponding finite-input bound. For finite input sets, a rate of \(m^{-1}\) is available when \(K[\inputs]\) is invertible \citep{Trevisan2023WideDN}. When it is singular, the best known rate is \(m^{-1/2}\). Our setting falls into the latter case: the effective-rank condition imposes a rank bound on \(K[\inputs]\), precluding invertibility for sufficiently large input sets. Thus, the available finite-input bounds yield the rate \(m^{-1/2}\).

\paragraph{Deeper networks.} Our analysis covers only one hidden layer. The steps in
Section~\ref{sec:functional} that convert a finite-sample bound into a functional one carry
over to deeper networks with polynomial activations. The finite-sample bound of
\citet[Theorem~3.1]{DBLP:journals/ml/BasteriT24} also holds at any depth, but its proof
propagates the comparison through the layers and therefore requires a Lipschitz
activation, while our functional argument requires a polynomial one. Extending Theorem~\ref{thm:functional} to deeper networks therefore requires a propagation
step that does not rely on Lipschitz continuity.

\paragraph{The metric.} Our bound controls the function values but not their derivatives.
The only stronger metric in Table~\ref{tab:related} is the $\mathcal C^{k}$ norm of
\citet{DBLP:journals/corr/abs-2307-06092}, obtained at the slower rate $m^{-1/8}$, and
whether $m^{-1/2}$ is attainable in such a norm is open in general.

\paragraph{Implications for tensor programs.} Tensor programs \citep{DBLP:journals/corr/abs-1910-12478} extend the Gaussian process limit to essentially all standard architectures, and \citet{DBLP:journals/corr/abs-2607-06290} recently made this limit quantitative, recovering the $m^{-1/2}$ rate of \citet{DBLP:journals/ml/BasteriT24} for general programs. The bounds of \citet{DBLP:journals/corr/abs-2607-06290} are finite-sample statements: they hold on a fixed set of inputs, with a constant that depends implicitly on that set. Our approach suggests how to lift such bounds to functional statements for tensor programs, yielding convergence rates that do not deteriorate as the evaluation set grows, provided their dependence on the number of inputs can be made explicit.
\section*{AI Disclosure}
ChatGPT (OpenAI) and Claude (Anthropic) were used as auxiliary tools for language editing, literature discovery, and typesetting support. Each reference and bibliographic item identified through these tools was independently verified against primary sources, and we take full responsibility for the final content and accuracy of the document.


\bibliography{bib}

@Inbook{Neal1996,
author="Neal, Radford M.",
title="Priors for Infinite Networks",
bookTitle="Bayesian Learning for Neural Networks",
year="1996",
publisher="Springer New York",
pages="29--53",
}

@inproceedings{DBLP:conf/nips/Williams96,
  author       = {Christopher K. I. Williams},
  title        = {Computing with Infinite Networks},
  booktitle    = {Advances in Neural Information Processing Systems },
  pages        = {295--301},
  publisher    = {{MIT} Press},
  year         = {1996},
}

@inproceedings{DBLP:conf/iclr/LeeBNSPS18,
  author       = {Jaehoon Lee and
                  Yasaman Bahri and
                  Roman Novak and
                  Samuel S. Schoenholz and
                  Jeffrey Pennington and
                  Jascha Sohl{-}Dickstein},
  title        = {Deep Neural Networks as {G}aussian Processes},
  booktitle    = {nternational Conference on Learning Representations},
  year         = {2018},
}

@inproceedings{DBLP:conf/iclr/MatthewsHRTG18,
  author       = {Matthews, Alexander G. de G.  and
                  Jiri Hron and
                  Mark Rowland and
                  Richard E. Turner and
                  Zoubin Ghahramani},
  title        = {{G}aussian Process Behaviour in Wide Deep Neural Networks},
  booktitle    = {International Conference on Learning Representations},
  year         = {2018},
}

@article{DBLP:journals/corr/abs-1910-12478,
  author       = {Greg Yang},
  title        = {Tensor Programs {I:} Wide Feedforward or Recurrent Neural Networks
                  of Any Architecture are {G}aussian Processes},
  journal      = {CoRR},
  year         = {2019},
}

@inproceedings{DBLP:conf/colt/EldanMS21,
  author       = {Ronen Eldan and
                  Dan Mikulincer and
                  Tselil Schramm},
  title        = {Non-asymptotic approximations of neural networks by {G}aussian processes},
  booktitle    = {Conference on Learning Theory},
  series       = {Proceedings of Machine Learning Research},
  volume       = {134},
  pages        = {1754--1775},
  year         = {2021},
}

@inproceedings{DBLP:conf/iclr/BracaleFFP21,
  author       = {Daniele Bracale and
                  Stefano Favaro and
                  Sandra Fortini and
                  Stefano Peluchetti},
  title        = {Large-width functional asymptotics for deep {G}aussian neural networks},
  booktitle    = {International Conference on Learning Representations},
  year         = {2021},
}

@article{DBLP:journals/corr/abs-2107-01562,
  author       = {Boris Hanin},
  title        = {Random Neural Networks in the Infinite Width Limit as {G}aussian Processes},
  journal      = {CoRR},
  year         = {2021},
}

@article{DBLP:journals/ml/BasteriT24,
  author       = {Andrea Basteri and
                  Dario Trevisan},
  title        = {Quantitative {G}aussian approximation of randomly initialized deep neural
                  networks},
  journal      = {Machine Learning},
  volume       = {113},
  number       = {9},
  pages        = {6373--6393},
  year         = {2024},

}

@inproceedings{Trevisan2023WideDN,
  title={Wide Deep Neural Networks with {G}aussian Weights are Very Close to {G}aussian Processes},
  author={Dario Trevisan},
journal      = {CoRR},
  year={2023},

}

@article{DBLP:journals/corr/abs-2307-06092,
title = {Quantitative {CLT}s in deep neural networks},
author = {S. Favaro and B. Hanin and D. Marinucci and I. Nourdin and G. Peccati},
year = {2025},
volume = {191},
pages = {933--977},
journal = {Probability Theory and Related Fields},
number = {3},
}

@inproceedings{DBLP:conf/aabi/BordinoFF24,
  author       = {Alberto Bordino and
                  Stefano Favaro and
                  Sandra Fortini},
  title        = {Non-asymptotic approximations of {G}aussian neural networks via second-order
                  Poincar{\'{e}} inequalities},
  booktitle    = {Symposium on Advances in Approximate Bayesian Inference},
  series       = {Proceedings of Machine Learning Research},
  volume       = {253},
  pages        = {45--78},
  year         = {2024},
}

@article{DBLP:journals/corr/abs-2504-08335,
  author       = {Lucia Celli and
                  Giovanni Peccati},
  title        = {Entropic bounds for conditionally {G}aussian vectors and applications
                  to neural networks},
  journal      = {CoRR},
  year         = {2025},
}

@article{DBLP:journals/corr/abs-2507-12686,
  author       = {Krishnakumar Balasubramanian and
                  Nathan Ross},
  title        = {Finite-Dimensional {G}aussian Approximation for Deep Neural Networks:
                  Universality in Random Weights},
  journal      = {CoRR},
  year         = {2025},
}

@article{DBLP:journals/corr/abs-2306-16308,
  author       = {Krishnakumar Balasubramanian and
                  Larry Goldstein and
                  Nathan Ross and
                  Adil Salim},
  title        = {{G}aussian random field approximation via {S}tein's method with applications
                  to wide random neural networks},
  journal      = {CoRR},
  year         = {2023},
}

@inproceedings{3844975.3844987,
author = {Ayday, Nil and Yang, Lingchu and Ghoshdastidar, Debarghya},
title = {{G}aussian process limit reveals structural benefits of graph transformers},
year = {2026},
booktitle = {Conference on Uncertainty in Artificial Intelligence},

}

@book{kallenberg2002foundations,
  author = {Kallenberg, Olav},
  edition = {Second},
  publisher = {Springer-Verlag, New York},
  series = {Probability and its Applications },
  title = {Foundations of modern probability},
  year = 2002
}

@inproceedings{DBLP:conf/colt/Klukowski22,
  author       = {Adam Klukowski},
  title        = {Rate of Convergence of Polynomial Networks to {G}aussian Processes},
  booktitle    = {Conference on Learning Theory},
  series       = {Proceedings of Machine Learning Research},
  pages        = {701--722},
  year         = {2022},

}

@inproceedings{DBLP:conf/icml/HronBSN20,
  author       = {Jiri Hron and
                  Yasaman Bahri and
                  Jascha Sohl{-}Dickstein and
                  Roman Novak},
  title        = {Infinite attention: {NNGP} and {NTK} for deep attention networks},
  booktitle    = {International Conference on Machine Learning},
  series       = {Proceedings of Machine Learning Research},
  volume       = {119},
  pages        = {4376--4386},
  year         = {2020},
}

@inproceedings{DBLP:conf/nips/NguyenNXL21,
  author       = {Timothy Nguyen and
                  Roman Novak and
                  Lechao Xiao and
                  Jaehoon Lee},
  title        = {Dataset Distillation with Infinitely Wide Convolutional Networks},
  booktitle    = {Neural Information Processing Systems},
  pages        = {5186--5198},
  year         = {2021},

}

@inproceedings{DBLP:conf/iclr/ChenGW21,
  author       = {Wuyang Chen and
                  Xinyu Gong and
                  Zhangyang Wang},
  title        = {Neural Architecture Search on ImageNet in Four {GPU} Hours: {A} Theoretically
                  Inspired Perspective},
  booktitle    = {International Conference on Learning Representations},
  year         = {2021},

}

@inproceedings{DBLP:conf/iclr/NiuA023,
  author       = {Zehao Niu and
                  Mihai Anitescu and
                  Jie Chen},
  title        = {Graph Neural Network-Inspired Kernels for {G}aussian Processes in Semi-Supervised
                  Learning},
  booktitle    = {International Conference on Learning Representations},
  year         = {2023},
}

@inproceedings{DBLP:conf/icml/AroraDHLW19,
  author       = {Sanjeev Arora and
                  Simon S. Du and
                  Wei Hu and
                  Zhiyuan Li and
                  Ruosong Wang},
  title        = {Fine-Grained Analysis of Optimization and Generalization for Overparameterized
                  Two-Layer Neural Networks},
  booktitle    = {International Conference on Machine Learning},
  series       = {Proceedings of Machine Learning Research},
  volume       = {97},
  pages        = {322--332},
  year         = {2019},
}

@Inbook{Nirenberg2011,
author={Nirenberg, L.},
editor={Faedo, S.},
title={On Elliptic Partial Differential Equations},
bookTitle={Il principio di minimo e sue applicazioni alle equazioni funzionali},
year={2011},
publisher={Springer},
}

@inproceedings{Villani2008OptimalTO,
  title={Optimal Transport: Old and New},
  author={C{\'e}dric Villani},
  year={2008},
}

@article{DBLP:journals/corr/abs-2203-03466,
  author       = {Greg Yang and
                  Edward J. Hu and
                  Igor Babuschkin and
                  Szymon Sidor and
                  Xiaodong Liu and
                  David Farhi and
                  Nick Ryder and
                  Jakub Pachocki and
                  Weizhu Chen and
                  Jianfeng Gao},
  title        = {Tensor Programs {V:} Tuning Large Neural Networks via Zero-Shot Hyperparameter
                  Transfer},
  journal      = {CoRR},
  year         = {2022},
}

@inproceedings{DBLP:conf/icml/YangH21,
  author       = {Greg Yang and
                  Edward J. Hu},
  title        = {Tensor Programs {IV:} Feature Learning in Infinite-Width Neural Networks},
  booktitle    = {International Conference on Machine Learning},
  series       = {Proceedings of Machine Learning Research},
  volume       = {139},
  pages        = {11727--11737},
  year         = {2021},
}

@book{RevuzYor1999,
  author = {Revuz, Daniel and Yor, Marc},
  title = {Continuous Martingales and {B}rownian Motion},
  edition = {3},
  series = {Grundlehren der mathematischen Wissenschaften},
  volume = {293},
  publisher = {Springer},
  year = {1999},
}

@article{DBLP:journals/corr/abs-2607-06290,
  author       = {Andrea Agazzi and
                  Eloy M{\'{o}}sig Garc{\'{\i}}a and
                  Dario Trevisan},
  title        = {Quantitative {G}aussian-Process limits of {T}ensor {P}rograms},
  journal      = {CoRR},
  year         = {2026},
}
\bibliographystyle{plainnat}
\appendix


\section{Auxiliary results}
\label{app:w2}
\begin{proposition}[properties of $\Wtwo$]\label{prop:w2}
Let $X,Y,Z$ be $E$-valued random variables with finite second moments.
\begin{enumerate}
  \item\label{it:coupling} \textup{(coupling bound)} For any pair $(X,Y)$ on a
    common probability space, \[\Wtwo^{E}(X,Y)\le\sqrt{\E[\norm{X-Y}_E^{2}]}\].

  \item\label{it:triangle} \textup{(triangle inequality)}
    $\Wtwo^{E}(X,Z)\le\Wtwo^{E}(X,Y)+\Wtwo^{E}(Y,Z)$.

  \item\label{it:translation} \textup{(independent translations)} If $Z$ is
    independent of $X$ and of $Y$, then
    \[ \Wtwo^{E}(X+Z,Y+Z)\le\Wtwo^{E}(X,Y). \]

  \item\label{it:convexity} \textup{(convexity)} If $Z$ takes values in some Euclidean space,
    then
    \[ \bigl(\Wtwo^{E}(X,Y)\bigr)^{2} \le\int\bigl(\Wtwo^{E}(\Prob_{X\mid Z=z},\Prob_Y)\bigr)^{2}\,d\Prob_Z(z). \]

  \item\label{it:gaussian} \textup{(Gaussian comparison)} If $E=\R^{n}$ and
    $X,Y$ are centred Gaussian, then
    \[ \Wtwo(X,Y)\le\norm{\sqrt{\Sigma(X)}-\sqrt{\Sigma(Y)}}. \]
\end{enumerate}
\end{proposition}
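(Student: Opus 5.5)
We treat the five items separately, since each is a short consequence of the definition of $\Wtwo^{E}$ as an infimum over couplings (Definition~\ref{def:w2}); throughout we use that $E$ is a separable Banach space, hence Polish, so regular conditional distributions exist.

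Item (1) is immediate: the pair $(X,Y)$ is itself one admissible coupling of $\Prob_X$ and $\Prob_Y$, so the infimum is at most $\sqrt{\E[\norm{X-Y}_E^2]}$. For item (3) we fix $\varepsilon>0$ and take a coupling $(X',Y')$ of $\Prob_X,\Prob_Y$ with $\E\norm{X'-Y'}_E^2\le\Wtwo^E(X,Y)^2+\varepsilon$. On a product extension we adjoin $Z'\sim\Prob_Z$ independent of $(X',Y')$. Since $Z$ is independent of $X$, the law of $X'+Z'$ is $\Prob_X*\Prob_Z=\Prob_{X+Z}$, and likewise $Y'+Z'$ has law $\Prob_{Y+Z}$. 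The difference is $(X'+Z')-(Y'+Z')=X'-Y'$, so item (1) gives $\Wtwo^E(X+Z,Y+Z)^2\le\Wtwo^E(X,Y)^2+\varepsilon$, and we let $\varepsilon\to0$. For item (5) we take a single standard Gaussian $W\sim\Normal(0,I_n)$ and set $X'=\sqrt{\Sigma(X)}\,W$ and $Y'=\sqrt{\Sigma(Y)}\,W$. These have the correct centred Gaussian laws, and
\[
\E\norm{X'-Y'}^2=\tr\Bigl(\bigl(\sqrt{\Sigma(X)}-\sqrt{\Sigma(Y)}\bigr)^2\Bigr)=\norm{\sqrt{\Sigma(X)}-\sqrt{\Sigma(Y)}}^2 ,
\]
so item (1) concludes.

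For item (2) we use the gluing lemma. We take $\varepsilon$-optimal couplings $\pi_{12}$ of $(\Prob_X,\Prob_Y)$ and $\pi_{23}$ of $(\Prob_Y,\Prob_Z)$. We disintegrate both with respect to their common marginal $\Prob_Y$, which is possible because $E$ is Polish. We then build a law on $E^3$ under which the first and third coordinates are conditionally independent given the second, with the prescribed conditional laws. Its $(1,2)$ and $(2,3)$ marginals are $\pi_{12}$ and $\pi_{23}$, and its $(1,3)$ marginal couples $\Prob_X$ with $\Prob_Z$. Minkowski's inequality in $L^2$ applied to $\norm{X'-Z'}_E\le\norm{X'-Y'}_E+\norm{Y'-Z'}_E$ then gives the triangle inequality up to $2\varepsilon$.

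Item (4) is where we expect the main obstacle, namely measurability. The plan is to choose, for each $z$, an optimal coupling $\pi_z$ of $\Prob_{X\mid Z=z}$ and $\Prob_Y$ in a way that depends measurably on $z$. For this we invoke the measurable selection of optimal plans in \citet[Corollary~5.22]{Villani2008OptimalTO}; the cost $\norm{x-y}_E^2$ is continuous, and both laws have finite second moment for $\Prob_Z$-a.e.\ $z$ by Fubini. We then define $\pi=\int\pi_z\,d\Prob_Z(z)$. Its first marginal is $\int\Prob_{X\mid Z=z}\,d\Prob_Z(z)=\Prob_X$ and its second marginal is $\Prob_Y$, so $\pi$ is a coupling of $\Prob_X$ and $\Prob_Y$. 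Its cost is
\[
\int\!\!\int\norm{x-y}_E^2\,d\pi_z\,d\Prob_Z(z)=\int\Wtwo^E\bigl(\Prob_{X\mid Z=z},\Prob_Y\bigr)^2\,d\Prob_Z(z),
\]
which gives the claim after taking the infimum. If the measurable selection is inconvenient, we would instead use $\varepsilon$-optimal plans chosen on a countable partition of the $z$-space, together with the continuity of $\Wtwo$ in its arguments.
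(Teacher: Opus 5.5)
The paper states this proposition in the appendix without proof, as a list of standard facts, so there is no argument of the authors to compare yours against. Your proof is correct and uses the standard construction for each item:
\begin{itemize}
\item in (1), the given pair is itself an admissible coupling;
\item in (2), gluing along the common marginal $\Prob_Y$ followed by Minkowski's inequality;
\item in (3), an independent copy of $Z$ adjoined to a near-optimal coupling;
\item in (5), the synchronous coupling $\sqrt{\Sigma(X)}\,W$, $\sqrt{\Sigma(Y)}\,W$;
\item in (4), the mixture $\int\pi_z\,d\Prob_Z(z)$ of measurably selected optimal plans.
\end{itemize}
Two things you handle correctly are worth keeping explicit. In (3), only the laws $\Prob_X*\Prob_Z$ and $\Prob_Y*\Prob_Z$ enter, so independence of $Z$ from $X$ and from $Y$ separately (not from the pair) suffices. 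In (4), the measurable selection also makes $z\mapsto\Wtwo^{E}(\Prob_{X\mid Z=z},\Prob_Y)^{2}$ measurable, which the right-hand side needs in order to be defined.

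Your fallback for (4), however, would not work as phrased. The map $z\mapsto\Prob_{X\mid Z=z}$ is merely measurable, so the cells of a partition of the $z$-space need not carry nearly equal conditional laws. Moreover, continuity of $\Wtwo$ controls only the numbers $\Wtwo^{E}(\mu,\Prob_Y)$, not a coupling of the actual conditional law with $\Prob_Y$, so you would be back to choosing couplings measurably in $z$.

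If you want to avoid measurable selection altogether, use Kantorovich duality instead. Take bounded continuous $\varphi,\psi$ with $\varphi(x)+\psi(y)\le\norm{x-y}_E^{2}$. Every coupling gives $\int\varphi\,d\Prob_{X\mid Z=z}+\int\psi\,d\Prob_Y\le\Wtwo^{E}(\Prob_{X\mid Z=z},\Prob_Y)^{2}$. Integrating in $z$ turns the left side into $\int\varphi\,d\Prob_X+\int\psi\,d\Prob_Y$, and the supremum over such $(\varphi,\psi)$ is $\Wtwo^{E}(X,Y)^{2}$. Measurability of the integrand follows from weak lower semicontinuity of $\mu\mapsto\Wtwo^{E}(\mu,\Prob_Y)^{2}$. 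This shows that $\Wtwo^{2}$ is a supremum of affine functionals of the laws, hence convex under mixtures, which is exactly the content of (4).
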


\begin{remark}[properties of the midpoint grids]\label{lem:grid}
\noindent \textbf{1.} $\abs{\inputs_k}=k^{d}$: the subscript $k$ counts points per
axis, not the total number of points.\\

\noindent \textbf{2.} For every $x\in I$ there is $x_i\in\inputs_k$ with
  \begin{equation}\label{eq:covering}
    \norm{x-x_i} \;\le\; \rho_k
    := \frac{1}{2k}\Bigl(\sum_{j=1}^{d}\abs{I_j}^{2}\Bigr)^{1/2} .
  \end{equation}
  Choosing $x_i$ coordinatewise gives $\abs{x_j-(x_i)_j}\le\abs{I_j}/(2k)$ for
  each $j$ leading to \eqref{eq:covering}.
\end{remark}

\begin{lemma}[square-root perturbation]\label{lem:sqrt}
Let $A,B\in\R^{n\times n}$ be positive semi-definite with $\range(A)\subseteq\range(B)$.
Then
\[
  \bigl\lVert\sqrt A-\sqrt B\bigr\rVert
  \;\le\;\frac{\norm{A-B}}{\sqrt{\lambda^{+}(B)}} .
\]
\end{lemma}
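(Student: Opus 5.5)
The plan is a direct spectral computation in orthonormal eigenbases of the two square roots. Write $X=\sqrt A=\sum_i\mu_i u_iu_i^{\top}$ and $Y=\sqrt B=\sum_j\nu_j v_jv_j^{\top}$, with $\mu_i,\nu_j\ge 0$ and $\{u_i\}$, $\{v_j\}$ orthonormal bases of $\R^{n}$. Since the Frobenius norm is invariant under orthogonal changes of basis, for any matrix $T$ we have $\norm{T}^{2}=\sum_{i,j}(u_i^{\top}Tv_j)^{2}$. Apply this to $T=X-Y$ and to $T=A-B=X^{2}-Y^{2}$. Using $u_i^{\top}X=\mu_iu_i^{\top}$ and $Yv_j=\nu_jv_j$, the entries are
\[
  u_i^{\top}(X-Y)v_j=(\mu_i-\nu_j)\,u_i^{\top}v_j,
  \qquad
  u_i^{\top}(A-B)v_j=(\mu_i^{2}-\nu_j^{2})\,u_i^{\top}v_j=(\mu_i+\nu_j)(\mu_i-\nu_j)\,u_i^{\top}v_j .
\]
So the claim reduces, entry by entry, to showing
\[
  \abs{(\mu_i-\nu_j)u_i^{\top}v_j}\le\frac{(\mu_i+\nu_j)\abs{(\mu_i-\nu_j)u_i^{\top}v_j}}{\sqrt{\lambda^{+}(B)}}
  \quad\text{for all } i,j .
\]
Summing the squares of these entrywise bounds then gives the lemma.

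\textbf{Case $\nu_j>0$.} Here $\nu_j^{2}$ is a positive eigenvalue of $B$, so $\nu_j\ge\sqrt{\lambda^{+}(B)}$. Hence $\mu_i+\nu_j\ge\sqrt{\lambda^{+}(B)}$, and the entrywise inequality holds trivially.

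\textbf{Case $\nu_j=0$.} This is the only place the range hypothesis enters, and it is the one step that needs care. Here no lower bound on $\mu_i+\nu_j=\mu_i$ is available, so instead I show that the left-hand entry vanishes. Because $\range(A)\subseteq\range(B)$ and both matrices are symmetric, taking orthogonal complements gives $\ker B\subseteq\ker A$. Since $v_j\in\ker B$, we get $Av_j=0$. This implies $Xv_j=0$, because $\norm{Xv_j}^{2}=v_j^{\top}Av_j=0$. Therefore
\[
  \mu_i\,u_i^{\top}v_j=u_i^{\top}Xv_j=0,
\]
and the entry $(\mu_i-\nu_j)u_i^{\top}v_j=\mu_iu_i^{\top}v_j$ is zero, so the inequality holds with both sides equal to zero.

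Combining the two cases and summing over $(i,j)$ yields
\[
  \norm{\sqrt A-\sqrt B}^{2}\le\frac{\norm{A-B}^{2}}{\lambda^{+}(B)},
\]
which is the statement after taking square roots. If $B=0$, the hypothesis forces $A=0$ and the bound is trivial, interpreting $\lambda^{+}(B)$ as irrelevant in that degenerate case. Without the range condition the argument breaks precisely in the case $\nu_j=0$, where $\mu_i$ could be arbitrarily small and nonzero; this is why the hypothesis is needed.
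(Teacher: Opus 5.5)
Your proof is correct, and it takes a slightly different, more self-contained route than the paper. The paper sets $W=\range(B)$ and observes that $A$, $B$, $\sqrt A$, $\sqrt B$ all vanish on $W^{\perp}$ and map $W$ into itself. It then cites the standard bound $\norm{\sqrt A-\sqrt B}\le\norm{A-B}/(\sqrt{\lambda_{\min}(A)}+\sqrt{\lambda_{\min}(B)})$ on $W$, where $B$ is positive definite with smallest eigenvalue $\lambda^{+}(B)$.

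Your entrywise computation in the two eigenbases is exactly the usual proof of that standard bound. Instead of restricting to $W$, you handle the null directions of $B$ directly, showing that the corresponding entries of $\sqrt A-\sqrt B$ vanish because $\ker B\subseteq\ker A$. This makes explicit that only $\mu_i\ge 0$ is needed in the case $\nu_j>0$, so $A$ need not be positive definite on $\range(B)$.

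That matters for how the lemma is used. The paper phrases the cited bound ``for positive definite matrices''. But in Step~4 of the proof of Theorem~\ref{thm:bt}, with $A=\widehat K_m$, the restriction of $A$ to $W$ can be singular, e.g.\ when $m<\rank K[\inputs]$. So the inequality actually being used is the one-sided version that you prove. The paper's route is shorter but leans on an external inequality whose hypotheses it states slightly too strongly. Your route is fully elementary and shows exactly where the range hypothesis enters.
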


\begin{proof}
Put $W:=\range(B)$. Since $A$ and $B$ are symmetric with $\range(A)\subseteq W$, both
vanish on $W^{\perp}$ and map $W$ into itself, and so do $\sqrt A$ and $\sqrt B$. On $W$
the matrix $B$ is positive definite with smallest eigenvalue $\lambda^{+}(B)$, and the
standard bound
$\lVert\sqrt A-\sqrt B\rVert\le\bigl(\sqrt{\lambda_{\min}(A)}+\sqrt{\lambda_{\min}(B)}\bigr)^{-1}\norm{A-B}$
for positive definite matrices applies there. All four matrices vanish on $W^{\perp}$, so
the restriction changes no norm.
\end{proof}

\begin{lemma}[range of a second-moment matrix]\label{lem:range}
Let $V$ be a random vector in $\R^{n}$ with $\E\norm{V}^{2}<\infty$ and
$\Sigma:=\E\bigl[VV^{\top}\bigr]$. Then $V\in\range(\Sigma)$ almost surely.
\end{lemma}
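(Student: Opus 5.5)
We must show that $V\in\range(\Sigma)$ almost surely. Equivalently, since $\Sigma$ is symmetric, $\range(\Sigma)=\ker(\Sigma)^{\perp}$, so it suffices to show that the orthogonal projection of $V$ onto $\ker(\Sigma)$ vanishes almost surely. The plan is to test $V$ against each vector of the kernel and use the second moment to force these inner products to be zero.

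First, $\Sigma$ is well defined and finite because $\E\norm{V}^{2}<\infty$, and $\abs{V_iV_j}\le\norm{V}^2$ bounds every entry. It is symmetric and positive semi-definite, since $u^{\top}\Sigma u=\E[(u^{\top}V)^2]\ge 0$. Hence the spectral theorem gives $\R^n=\range(\Sigma)\oplus\ker(\Sigma)$ as an orthogonal sum.

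Next, fix any $u\in\ker(\Sigma)$. Then
\[
  0=u^{\top}\Sigma u=\E\bigl[(u^{\top}V)^{2}\bigr],
\]
so $u^{\top}V=0$ almost surely.

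To upgrade this to a single almost-sure event, choose a finite orthonormal basis $u_1,\dots,u_k$ of $\ker(\Sigma)$. The union of the $k$ null events $\{u_l^{\top}V\neq 0\}$ is null. Outside it, $V$ is orthogonal to every $u_l$, hence to all of $\ker(\Sigma)$. Therefore $V\in\ker(\Sigma)^{\perp}=\range(\Sigma)$ almost surely.

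Every step is elementary. The only point needing care is the passage from "for each $u$" to "for all $u$ simultaneously", since an uncountable union of null events need not be null. Finite-dimensionality resolves this, because it suffices to check a finite basis of $\ker(\Sigma)$. Note also that centring is not required, since $\Sigma$ here is the raw second-moment matrix.
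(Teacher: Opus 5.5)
Your proof is correct and matches the paper's argument: the identity $u^{\top}\Sigma u=\E\bigl[(u^{\top}V)^{2}\bigr]=0$ for $u\in\ker\Sigma$, followed by a finite basis of $\ker\Sigma$ to collect everything into one null set, giving $V\perp\ker\Sigma=\range(\Sigma)^{\perp}$ almost surely. You also state explicitly why the finite basis is needed (an uncountable union of null sets need not be null), which the paper's one-line proof leaves implicit.
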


\begin{proof}
If $\Sigma u=0$ then $\E\bigl[(u^{\top}V)^{2}\bigr]=u^{\top}\Sigma u=0$, so
$u^{\top}V=0$ almost surely. Applying this to a basis of $\ker\Sigma$ gives
$V\perp\ker\Sigma=\range(\Sigma)^{\perp}$ almost surely.
\end{proof}

\section{Proofs of Section~\ref{sec:prelim}}
\label{sec:proofs_prelim}

\begin{proof}[Theorem~\ref{thm:bt}]
Write $\mathbf 1\in\R^{n}$ for the all-ones vector and let $Z^{(r)}\in\R^{n}$ denote the
$r$-th row of $g[\inputs]\in\R^{m\times n}$, that is
$Z^{(r)}=\bigl(g_r(x_1),\dots,g_r(x_n)\bigr)^{\top}$.

\medskip
\emph{Step 1 (structure of the evaluation vector).}
Each $g_r(x)=\langle W_r,x\rangle+b_r$ is an affine function of $x$ with independent
centred Gaussian coefficients, so $Z^{(r)}$ is a centred Gaussian vector with
\[
  \Cov\bigl(g_r(x_i),g_r(x_j)\bigr)
  =\frac{\sigma_w^{2}}{d}\langle x_i,x_j\rangle+\sigma_b^{2}
  =\kappa(x_i,x_j)
\]
by \eqref{eq:kappa}; the pairs $(W_r,b_r)$ are independent across $r$, so
$Z^{(1)},\dots,Z^{(m)}$ are i.i.d.\ $\Normal\bigl(0,\kappa[\inputs]\bigr)$. The input
dimension enters only here, through $\kappa$. With $\phi$ acting componentwise,
\[
  f[\inputs]\;=\;\sum_{r=1}^{m}a_r\,\phi\bigl(Z^{(r)}\bigr)\;+\;c\,\mathbf 1 .
\]
By Assumption~\ref{ass:activation}\,(\ref{it:growth}) the vector $\phi(Z^{(r)})$ has
finite moments of every order, so all quantities below are finite.

\medskip
\emph{Step 2 (removing the output bias).}
Since $K_c=K+\tau_c^{2}$, we may realise $G[\inputs]=\Gamma+c\,\mathbf 1$ with
$\Gamma\sim\Normal\bigl(0,K[\inputs]\bigr)$ independent of $c$, and $c$ is independent of
$(W,b,a)$. Writing $\tilde f[\inputs]:=\sum_r a_r\phi(Z^{(r)})$, the independent
translation $c\,\mathbf 1$ cancels by Proposition~\ref{prop:w2}\,(\ref{it:translation}):
\[
  \Wtwo\bigl(f[\inputs],G[\inputs]\bigr)\;\le\;\Wtwo\bigl(\tilde f[\inputs],\Gamma\bigr).
\]

\medskip
\emph{Step 3 (conditional Gaussianity).}
Condition on $(W,b)$, equivalently on $Z^{(1)},\dots,Z^{(m)}$. The coefficients $a_r$ are
i.i.d.\ $\Normal(0,\tau_a^{2}/m)$ and independent of $(W,b)$, so conditionally
$\tilde f[\inputs]$ is a centred Gaussian vector with covariance
\[
  \widehat K_m\;:=\;\frac{\tau_a^{2}}{m}\sum_{r=1}^{m}
    \phi\bigl(Z^{(r)}\bigr)\phi\bigl(Z^{(r)}\bigr)^{\top},
  \qquad
  \E\bigl[\widehat K_m\bigr]=\tau_a^{2}\,\E\bigl[\phi(Z)\phi(Z)^{\top}\bigr]=K[\inputs],
\]
the last identity being \eqref{eq:K}. By
Proposition~\ref{prop:w2}\,(\ref{it:convexity}), applied with the Euclidean conditioning
variable $(W,b)$,
\[
  \Wtwo\bigl(\tilde f[\inputs],\Gamma\bigr)^{2}
  \;\le\;\E\Bigl[\Wtwo\bigl(\Normal(0,\widehat K_m),\,
                            \Normal(0,K[\inputs])\bigr)^{2}\Bigr].
\]

\medskip
\emph{Step 4 (covariance perturbation).}
Apply Lemma~\ref{lem:range} to $V=\tau_a\phi(Z)$, whose second-moment matrix is
$K[\inputs]$: almost surely $\phi(Z)\in\range\bigl(K[\inputs]\bigr)$. Each $Z^{(r)}$ has
the law of $Z$, so almost surely
$\range(\widehat K_m)\subseteq\range\bigl(K[\inputs]\bigr)$. Assumption
\ref{ass:activation}\,(\ref{it:nondeg}) gives $K(x,x)>0$, so $K[\inputs]\neq0$ and
$\lambda^{+}(K[\inputs])$ is defined. Proposition~\ref{prop:w2}\,(\ref{it:gaussian})
followed by Lemma~\ref{lem:sqrt} therefore yields
\[
  \Wtwo\bigl(\Normal(0,\widehat K_m),\Normal(0,K[\inputs])\bigr)
  \;\le\;\bigl\lVert\sqrt{\widehat K_m}-\sqrt{K[\inputs]}\bigr\rVert
  \;\le\;\frac{\bigl\lVert\widehat K_m-K[\inputs]\bigr\rVert}
              {\sqrt{\lambda^{+}\bigl(K[\inputs]\bigr)}} .
\]

\medskip
\emph{Step 5 (the fluctuation of $\widehat K_m$).}
The matrices $Y_r:=\tau_a^{2}\phi(Z^{(r)})\phi(Z^{(r)})^{\top}$ are i.i.d.\ with mean
$K[\inputs]$, and $\widehat K_m-K[\inputs]=\tfrac1m\sum_r\bigl(Y_r-K[\inputs]\bigr)$.
Expanding the Frobenius norm and using that the summands are independent and centred, the
cross terms vanish, so
\[
  \E\bigl\lVert\widehat K_m-K[\inputs]\bigr\rVert^{2}
  \;=\;\frac{1}{m}\,
    \E\Bigl[\bigl\lVert\tau_a^{2}\phi(Z)\phi(Z)^{\top}-K[\inputs]\bigr\rVert^{2}\Bigr].
\]
Combining Steps 2--5 gives
$\Wtwo(f[\inputs],G[\inputs])^{2}\le C(\phi,\inputs)^{2}/m$ with $C(\phi,\inputs)$ as in
\eqref{eq:bt-constant}.
No step above uses global Lipschitz continuity of $\phi$. The only property of the
activation that enters is the finiteness of the fourth moment in Step~5, which follows from
Assumption~\ref{ass:activation}\,(\ref{it:growth}).
\end{proof}

\begin{corollary}[at most linear growth]\label{thm:numerator}
Let $\inputs\subset I$ be a finite sample set with $\abs{\inputs}=n$ and let
$\phi$ satisfy Assumption~\ref{ass:activation} with growth constants $A,p$. Then
the numerator of \eqref{eq:bt-constant},
\[
  N:=\sqrt{\E\Bigl[\bigl\lVert\tau_a^2\,\phi(Z)\phi(Z)^{\top}
      -K[\inputs]\bigr\rVert^{2}\Bigr]},
  \qquad Z\sim\Normal\bigl(0,\kappa[\inputs]\bigr)\in\R^{n},
\]
satisfies $N\le \tau_a^2 A^2\sqrt{M_{4p}}\;n$, where
\[
  M_{s}:=\E\bigl[(1+\abs{\xi})^{s}\bigr]<\infty
  \quad\text{for }s>0 \text{ and } \xi\sim\Normal(0,\kappa_I^{2}) \text{ with } \kappa_I^{2}:=\max_{x\in I}\kappa(x,x)<\infty.
\]

The constant depends only on $\phi$ through $A,p$, on $I$, and on the variances, and not on $n$.
\end{corollary}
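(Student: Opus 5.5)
The plan is to bound the second moment of the Frobenius norm entrywise, using only the polynomial growth of $\phi$ (Assumption~\ref{ass:activation}\eqref{it:growth}). Write $Y:=\tau_a^2\phi(Z)\phi(Z)^{\top}$ with $Z\sim\Normal(0,\kappa[\inputs])$. By \eqref{eq:K} we have $\E[Y]=K[\inputs]$. The first step is the variance inequality for the centred random matrix $Y-\E Y$:
\[
  N^2=\E\bigl\lVert Y-\E Y\bigr\rVert^{2}=\E\norm{Y}^{2}-\norm{\E Y}^{2}\le \E\norm{Y}^{2}.
\]
This removes $K[\inputs]$ from the problem entirely, so no spectral information is needed.

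Next I expand the Frobenius norm as
\[
  \E\norm{Y}^2=\tau_a^4\sum_{i,j=1}^{n}\E\bigl[\phi(Z_i)^2\phi(Z_j)^2\bigr],
\]
and bound each of the $n^2$ terms by Cauchy--Schwarz:
\[
  \E\bigl[\phi(Z_i)^2\phi(Z_j)^2\bigr]\le \sqrt{\E[\phi(Z_i)^4]\,\E[\phi(Z_j)^4]}.
\]
The growth bound then gives $\E[\phi(Z_i)^4]\le A^4\,\E[(1+\abs{Z_i})^{4p}]$. Each marginal satisfies $Z_i\sim\Normal(0,\kappa(x_i,x_i))$, and $\kappa(x_i,x_i)\le\kappa_I^2$. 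The function $t\mapsto(1+\abs{t})^{4p}$ is even and nondecreasing in $\abs{t}$, so scaling a Gaussian up increases this moment. Concretely, $Z_i\overset{d}{=}s\,\xi$ with $s=\sqrt{\kappa(x_i,x_i)}/\kappa_I\le 1$, which gives $\E[(1+\abs{Z_i})^{4p}]\le M_{4p}$. Hence every term is at most $A^4M_{4p}$, so $\E\norm{Y}^2\le\tau_a^4A^4M_{4p}\,n^2$, and taking square roots yields $N\le\tau_a^2A^2\sqrt{M_{4p}}\,n$.

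Finally I verify the side claims. $\kappa_I^2<\infty$ because $x\mapsto\kappa(x,x)=\frac{\sigma_w^2}{d}\norm{x}^2+\sigma_b^2$ is continuous on the compact set $I$, so the maximum is attained. $M_{4p}<\infty$ because Gaussians have moments of all orders. Note that $\kappa_I^2>0$ by Assumption~\ref{ass:activation}\eqref{it:nondeg}, so $\xi$ is nondegenerate; even the degenerate case would be harmless. The constant depends only on $A,p,\tau_a^2$, and on $I,\sigma_w^2,\sigma_b^2$ through $\kappa_I$, and not on $n$. There is no real obstacle. The only point needing care is the monotonicity-in-variance comparison of the Gaussian moments, which the scaling representation $Z_i\overset{d}{=}s\,\xi$ with $s\le1$ settles directly.
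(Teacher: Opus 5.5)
Your proof is correct and follows the paper's argument step for step: you drop $K[\inputs]$ by centring, expand the Frobenius norm into $n^2$ terms, apply Cauchy--Schwarz, and use the polynomial growth bound together with monotonicity of the Gaussian moment in the variance. Your scaling representation $Z_i\overset{d}{=}s\,\xi$ with $s\le 1$ is just a concrete way of stating the paper's observation that $\sigma\mapsto\E\bigl[(1+\sigma\abs{\xi})^{4p}\bigr]$ is non-decreasing.
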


\begin{proof}[Corollary~\ref{thm:numerator}]
Abbreviate $Z=(Z_1,\dots,Z_n)\sim\Normal\bigl(0,\kappa[\inputs]\bigr)$ and
consider the random matrix
\[
  Y:=\tau_a^2\,\phi(Z)\phi(Z)^{\top}\in\R^{n\times n},
  \qquad Y_{ij}=\tau_a^2\,\phi(Z_i)\phi(Z_j).
\]
For every pair $(i,j)$ the vector $(Z_i,Z_j)$ is centred Gaussian with
covariance
$\begin{psmallmatrix}\kappa(x_i,x_i)&\kappa(x_i,x_j)\\
\kappa(x_i,x_j)&\kappa(x_j,x_j)\end{psmallmatrix}$,
so by \eqref{eq:K} we have $\E[Y_{ij}]=K(x_i,x_j)$, that is
$\E[Y]=K[\inputs]$.

\emph{Step 1 (centring).} Entrywise,
$\E\bigl[(Y_{ij}-\E[Y_{ij}])^2\bigr]=\E[Y_{ij}^2]-\bigl(\E[Y_{ij}]\bigr)^2
\le\E[Y_{ij}^2]$. Summing over $i,j$ gives
\[
  N^2=\E\Bigl[\bigl\lVert Y-K[\inputs]\bigr\rVert^{2}\Bigr]
  \;\le\;\E\bigl[\norm{Y}^{2}\bigr].
\]

\emph{Step 2 (reduction to fourth moments).} By the definition of the Frobenius
norm,
\[
  \norm{Y}^{2}=\bigl(\tau_a^2\bigr)^{2}\sum_{i,j=1}^{n}\phi(Z_i)^2\phi(Z_j)^2,
  \qquad\text{hence}\qquad
  \E\bigl[\norm{Y}^{2}\bigr]
  =\bigl(\tau_a^2\bigr)^{2}\sum_{i,j=1}^{n}\E\bigl[\phi(Z_i)^2\phi(Z_j)^2\bigr].
\]

\emph{Step 3 (a uniform fourth-moment bound).} By the Cauchy--Schwarz
inequality,
$\E[\phi(Z_i)^2\phi(Z_j)^2]\le\E[\phi(Z_i)^4]^{1/2}\E[\phi(Z_j)^4]^{1/2}$, so it
suffices to bound $\E[\phi(Z_i)^4]$ uniformly in $i$. Polynomial growth
(Assumption~\ref{ass:activation}\,(1)) gives $\phi(z)^4\le A^4(1+\abs{z})^{4p}$.
Each $Z_i$ is centred Gaussian with variance
$\kappa_i^2=\kappa(x_i,x_i)\le\kappa_I^2$, and for $\xi\sim\Normal(0,1)$ the map
$\sigma\mapsto\E\bigl[(1+\sigma\abs{\xi})^{4p}\bigr]$ is non-decreasing on
$[0,\infty)$, whence
\[
  \E\bigl[\phi(Z_i)^4\bigr]\le A^4\,\E\bigl[(1+\abs{Z_i})^{4p}\bigr]
  \le A^4\,\E_{Z\sim\Normal(0,\kappa_I^2)}\bigl[(1+\abs{Z})^{4p}\bigr]
  = A^4 M_{4p}.
\]

Combining the three steps,
\[
  N^{2}\;\le\;\bigl(\tau_a^2\bigr)^{2}\sum_{i,j=1}^{n}A^4M_{4p}
  \;=\;\bigl(\tau_a^2\bigr)^{2}A^4M_{4p}\,n^2 ,
\]
and taking square roots yields $N\le\tau_a^2A^2\sqrt{M_{4p}}\,n$. The moment
$M_{4p}$ is finite because Gaussian random variables have moments of all orders,
and $\kappa_I^2<\infty$ because $I$ is compact and $\kappa$ is continuous; the
constant depends only on $A$, $p$, on $I$ through $\kappa_I^2$, and on the
variances $\sigma_w^2,\sigma_b^2,\tau_a^2$, as stated.
\end{proof}

\begin{proof}[Proposition~\ref{cor:constant}]
Part~(1) is Corollary~\ref{thm:numerator}. The matrix inside the norm in \eqref{eq:bt-constant} has
$n^2$ entries, each bounded in expectation uniformly over the compact set $I$, so its
Frobenius norm grows like $\sqrt{n^{2}}=n$. We bound the centred fluctuation by the second
moment of $\tau_a^2\phi(Z)\phi(Z)^{\top}$, reduce the Frobenius norm to its entries, and
control those by Assumption~\ref{ass:activation}\,(\ref{it:growth}).

For part~(2), let $\inputs\subset I$ satisfy \eqref{eq:eff-rank} with constant $\nu$ and
let $n=\abs{\inputs}$. Then Corollary~\ref{thm:numerator} and \eqref{eq:eff-rank} give
\begin{equation}\label{eq:Cgrowth}
  C(\phi,\inputs)^{2}\;\le\;
  \frac{\bigl(\tau_a^2A^2\sqrt{M_{4p}}\,n\bigr)^{2}}{\nu\,n}\;=\;D^{2}n,
  \qquad D:=\tau_a^2A^2\sqrt{\frac{M_{4p}}{\nu}} ,
\end{equation}
so that $C(\phi,\inputs)\le D\sqrt n$, and Theorem~\ref{thm:bt} gives
\[
  \Wtwo\bigl(f[\inputs],G[\inputs]\bigr)\;\le\;\frac{C(\phi,\inputs)}{\sqrt m}
  \;\le\;D\sqrt{\frac{n}{m}} .
\]

The order $\sqrt n$ cannot be improved within \eqref{eq:bt-constant}. For any finite
sample set $\inputs\subset I$ with $n=\abs{\inputs}$, write
$Z\sim\Normal\bigl(0,\kappa[\inputs]\bigr)\in\R^{n}$ as in
Corollary~\ref{thm:numerator}. The trace of a positive semi-definite matrix dominates each
of its eigenvalues, so by Assumption~\ref{ass:activation}\,(\ref{it:growth}) and
$\kappa(x_i,x_i)\le\kappa_I^{2}$, arguing as in the proof of
Corollary~\ref{thm:numerator},
\begin{equation}\label{eq:trace}
  \lambda^{+}\bigl(K[\inputs]\bigr)\;\le\;\tr K[\inputs]
  \;=\;\tau_a^{2}\sum_{i=1}^{n}\E\bigl[\phi(Z_i)^{2}\bigr]\;\le\;\bar\nu\,n,
  \qquad \bar\nu:=\tau_a^{2}A^{2}M_{2p} ,
\end{equation}
so linear growth is the largest order the denominator can have.

Combining
\eqref{eq:trace} with~\eqref{eq:eff-rank} and
$\operatorname{tr}(K[\mathcal X])\ge\operatorname{rank}(K[\mathcal X])\,\lambda^+(K[\mathcal X])$
gives $\operatorname{rank}(K[\mathcal X])\le\bar\nu/\nu$
so the condition forces the $K[\inputs]$ to have bounded rank.  This excludes every
activation for which this rank is unbounded, in particular every activation whose
kernel $K$ is strictly positive definite on $I$, since then
$\operatorname{rank} (K[\mathcal X]) = |\mathcal X|$. The condition is therefore a
finite-rank condition, and Lemma~\ref{lem:polynomial-er} verifies it for the midpoint grids
with polynomial activations.
\end{proof}

\section{Proofs of Section~\ref{sec:functional}}
\label{sec:proofs_functional}

\begin{proof}[Lemma~\ref{lem:polynomial-er}]

Write $\phi(z)=\sum_{h=0}^{r}\beta_h z^{h}$ with $\beta_r\neq0$, and abbreviate
$R:=\binom{r+d}{d}$.

\medskip
\emph{Step 1 (finite-rank representation of the kernel).}
Let $\xi_0\sim\Normal(0,1)$ and $\xi_1\sim\Normal(0,\mathrm{I}_{d\times d})$ be
independent, and set
\begin{equation}\label{eq:Zx}
  Z_x\;:=\;\frac{\sigma_w}{\sqrt d}\,\xi_1^{\top}x+\sigma_b\,\xi_0,
  \qquad x\in\R^{d}.
\end{equation}
This realisation has the covariance required by \eqref{eq:K}: by independence of
$\xi_0$ and $\xi_1$,
\[
  \Cov(Z_x,Z_y)\;=\;\frac{\sigma_w^{2}}{d}\,\Cov\bigl(\xi_1^{\top}x,\xi_1^{\top}y\bigr)
  +\sigma_b^{2}\,\Var(\xi_0)\;=\;\frac{\sigma_w^{2}}{d}\,\langle x,y\rangle+\sigma_b^{2}
  \;=\;\kappa(x,y).
\]
By the binomial theorem,
\begin{equation}\label{eq:expansion}
  \phi(Z_x)\;=\;\sum_{h=0}^{r}\beta_h Z_x^{h}
  \;=\;\sum_{i=0}^{r}\bigl(\xi_1^{\top}x\bigr)^{i}P_i,
  \qquad
  P_i\;:=\;\Bigl(\frac{\sigma_w^{2}}{d}\Bigr)^{\!i/2}
  \sum_{h=i}^{r}\binom{h}{i}\beta_h\,(\sigma_b\xi_0)^{h-i}\;\in\;\R .
\end{equation}
For $0\le i\le r$ define $\varphi_i:\R^{d}\to\R^{\binom{i+d-1}{d-1}}$ by
\begin{equation}\label{eq:features}
  \varphi_i(x)\;=\;\bigoplus_{\abs{\alpha}=i}\sqrt{\binom{i}{\alpha_1,\dots,\alpha_d}}\;x^{\alpha},
\end{equation}
the dimension $\binom{i+d-1}{d-1}$ being the number of multi-indices of total degree
$i$. Fixing the order of these multi-indices gives a bijection
$e_i:\bigl\{1,\dots,\binom{i+d-1}{d-1}\bigr\}\to\{\alpha\in\mathbb N_0^{d}:\abs{\alpha}=i\}$
between the coordinates of $\varphi_i$ and the multi-indices of degree $i$. In this
notation the multinomial theorem reads
$\bigl(\xi_1^{\top}x\bigr)^{i}=\bigl\langle\varphi_i(\xi_1),\varphi_i(x)\bigr\rangle$,
so \eqref{eq:expansion} becomes
\begin{equation}\label{eq:inner}
  \phi(Z_x)\;=\;\sum_{i=0}^{r}\bigl\langle P_i\,\varphi_i(\xi_1),\,\varphi_i(x)\bigr\rangle
  \;=\;\Bigl\langle\bigoplus_{i=0}^{r}P_i\,\varphi_i(\xi_1),\;
  \bigoplus_{i=0}^{r}\varphi_i(x)\Bigr\rangle .
\end{equation}
The vectors in the last inner product have dimension
$\sum_{i=0}^{r}\binom{i+d-1}{d-1}=R$. Setting
\[
  \Psi(x):=\bigoplus_{i=0}^{r}\varphi_i(x)\in\R^{R},\qquad
  \Phi:=\bigoplus_{i=0}^{r}P_i\,\varphi_i(\xi_1),\qquad
  M:=\tau_a^{2}\,\E\bigl[\Phi\Phi^{\top}\bigr],
\]
we obtain from \eqref{eq:K} and \eqref{eq:inner}
\begin{equation}\label{eq:K-factored}
  K(x,y)\;=\;\tau_a^{2}\,\E\bigl[\phi(Z_x)\phi(Z_y)\bigr]\;=\;\Psi(x)^{\top}M\,\Psi(y).
\end{equation}
The matrix $M$ is positive semi-definite, and $M\neq0$: since $\deg\phi=r$, the sum
defining $P_r$ in \eqref{eq:expansion} has only the term $h=r$, so
$P_r=\beta_r(\sigma_w^{2}/d)^{r/2}\neq0$ is deterministic, and the degree-$r$ diagonal
block of $M$ equals
$\tau_a^{2}\beta_r^{2}(\sigma_w^{2}/d)^{r}\,\E\bigl[\varphi_r(\xi_1)\varphi_r(\xi_1)^{\top}\bigr]\neq0$.
In particular $\lambda^{+}(M)>0$.

Let $V_k\in\R^{k^{d}\times R}$ be the matrix with rows
$\Psi(x_1)^{\top},\dots,\Psi(x_{k^{d}})^{\top}$, where $\inputs_k=\{x_1,\dots,x_{k^{d}}\}$.
Then \eqref{eq:K-factored} gives
\begin{equation}\label{eq:Kk}
  K[\inputs_k]\;=\;V_kMV_k^{\top},
\end{equation}
whence $\rank K[\inputs_k]\le\rank(M)\le R$, which is the first assertion.

\medskip
\emph{Step 2 (passing to a matrix of fixed dimension).}
Put $B:=V_kM^{1/2}$, so that $K[\inputs_k]=BB^{\top}$, and set
\begin{equation}\label{eq:Tk}
  T_k\;:=\;B^{\top}B\;=\;M^{1/2}V_k^{\top}V_kM^{1/2}\;\in\;\R^{R\times R}.
\end{equation}
The matrices $BB^{\top}$ and $B^{\top}B$ have the same nonzero eigenvalues, so
$\lambda^{+}\bigl(K[\inputs_k]\bigr)=\lambda^{+}(T_k)$ whenever $T_k\neq0$. Unlike
$K[\inputs_k]$, the matrix $T_k$ has dimension $R$, independent of $k$. We therefore
analyse the normalised moment matrix
\begin{equation}\label{eq:Theta}
  \Theta_k\;:=\;\frac{1}{k^{d}}V_k^{\top}V_k\;\in\;\R^{R\times R},
  \qquad\text{so that}\qquad
  T_k\;=\;k^{d}\,M^{1/2}\Theta_kM^{1/2}.
\end{equation}

\medskip
\emph{Step 3 (the Riemann-sum limit).}
Index the rows and columns of $\Theta_k$ by pairs $(i,s)$ with $0\le i\le r$ and
$1\le s\le\binom{i+d-1}{d-1}$, where $(i,s)$ refers to the $s$-th coordinate of the
degree-$i$ block of $\Psi$. By \eqref{eq:features},
\begin{equation}\label{eq:Theta-entries}
  \bigl(\Theta_k\bigr)_{(i,s),(j,t)}
  \;=\;\frac{1}{k^{d}}\sum_{l=1}^{k^{d}}\bigl(\varphi_i(x_l)\bigr)_s\bigl(\varphi_j(x_l)\bigr)_t
  \;=\;c_{(i,s),(j,t)}\cdot\frac{1}{k^{d}}\sum_{l=1}^{k^{d}}x_l^{\,e_i(s)+e_j(t)},
\end{equation}
where
$c_{(i,s),(j,t)}:=\bigl[\binom{i}{e_i(s)}\binom{j}{e_j(t)}\bigr]^{1/2}$
does not depend on $k$. Since $\inputs_k$ is the product grid \eqref{eq:grid}, the points
$x_l$ are the midpoints of the $k^{d}$ cells of a uniform partition of $I$, each of volume
$\Vol(I)/k^{d}$. The sum in \eqref{eq:Theta-entries} is therefore the $d$-dimensional
midpoint rule for the continuous function $x\mapsto x^{\alpha}$ with
$\alpha:=e_i(s)+e_j(t)$, and
\begin{equation}\label{eq:riemann}
  \frac{1}{k^{d}}\sum_{l=1}^{k^{d}}x_l^{\alpha}
  \;=\;\frac{1}{\Vol(I)}\sum_{l=1}^{k^{d}}\frac{\Vol(I)}{k^{d}}\,x_l^{\alpha}
  \;\xrightarrow[k\to\infty]{}\;\frac{1}{\Vol(I)}\int_I x^{\alpha}\,dx .
\end{equation}
Writing $H$ for the matrix with entries
$H_{(i,s),(j,t)}:=c_{(i,s),(j,t)}\Vol(I)^{-1}\int_I x^{e_i(s)+e_j(t)}\,dx$, the convergence
\eqref{eq:riemann} holds entrywise, and since the dimension $R$ does not depend on $k$
this gives $\norm{\Theta_k-H}\to0$.

The limit $H$ is positive definite. Indeed, for $u\in\R^{R}$,
\begin{equation}\label{eq:H-quadratic}
  u^{\top}Hu\;=\;\frac{1}{\Vol(I)}\int_I q_u(x)^{2}\,dx,
  \qquad
  q_u(x)\;:=\;\sum_{i=0}^{r}\;\sum_{s=1}^{\binom{i+d-1}{d-1}}u_{(i,s)}
  \sqrt{\binom{i}{e_i(s)}}\;x^{e_i(s)} .
\end{equation}
The map $(i,s)\mapsto e_i(s)$ is a bijection onto $\{\alpha:\abs{\alpha}\le r\}$, and the
monomials $x^{\alpha}$ are linearly independent, so $u\neq0$ implies $q_u\not\equiv0$. A
nonzero polynomial vanishes on a Lebesgue null set, so the integral in
\eqref{eq:H-quadratic} is strictly positive, and $\lambda_{\min}(H)>0$. For every unit
vector $v$,
\[
  v^{\top}\Theta_kv\;\ge\;v^{\top}Hv-\abs{v^{\top}(\Theta_k-H)v}
  \;\ge\;\lambda_{\min}(H)-\norm{\Theta_k-H},
\]
so there is $k_1\in\mathbb N$ with
\begin{equation}\label{eq:lammin}
  \lambda_{\min}(\Theta_k)\;\ge\;\tfrac12\lambda_{\min}(H)\;>\;0
  \qquad\text{for all }k\ge k_1 .
\end{equation}

\medskip
\emph{Step 4 (conclusion).}
Fix $k\ge k_1$. By \eqref{eq:lammin} the matrix $\Theta_k$ is positive definite, so
$T_k\neq0$: if $M^{1/2}\Theta_kM^{1/2}=0$ then
$\norm{\Theta_k^{1/2}M^{1/2}u}^{2}=u^{\top}M^{1/2}\Theta_kM^{1/2}u=0$ for every $u$,
forcing $M^{1/2}=0$ and hence $M=0$, which Step 1 excludes. From \eqref{eq:Tk} we have
$\range(T_k)\subseteq\range(M^{1/2})=\range(M)$, and every unit vector
$v\in\range(M)$ satisfies $v^{\top}Mv\ge\lambda^{+}(M)$. Hence for every unit vector
$v\in\range(T_k)$,
\begin{equation}\label{eq:final-quad}
\begin{split}
  \frac{1}{k^{d}}\,v^{\top}T_kv
  & = \bigl(M^{1/2}v\bigr)^{\top}\Theta_k\bigl(M^{1/2}v\bigr) \\
  & \ge \lambda_{\min}(\Theta_k)\,\norm{M^{1/2}v}^{2} \\
  & = \lambda_{\min}(\Theta_k)\,v^{\top}Mv \\
  & \ge \lambda_{\min}(\Theta_k)\,\lambda^{+}(M).
\end{split}
\end{equation}
Minimising over such $v$ gives
$\lambda^{+}\bigl(K[\inputs_k]\bigr)=\lambda^{+}(T_k)\ge k^{d}\lambda_{\min}(\Theta_k)\lambda^{+}(M)$,
and with \eqref{eq:lammin},
\[
  \lambda^{+}\bigl(K[\inputs_k]\bigr)\;\ge\;\nu\,k^{d}
  \qquad\text{for every }k\ge k_1,
  \qquad \nu:=\tfrac12\lambda_{\min}(H)\,\lambda^{+}(M)\;>\;0 .
\]
For the finitely many $k<k_1$, Assumption~\ref{ass:activation}(3) gives, for every
$x\in\mathcal X_k$, that $\kappa(x,x)>0$ and that $\phi$ does not vanish
$\mathcal N(0,\kappa(x,x))$-almost everywhere, so
$K(x,x) = \tau_a^2\,\mathbb E[\phi(Z_x)^2] > 0$. Hence $K[\mathcal X_k]\neq 0$ and
$\lambda^+(K[\mathcal X_k])>0$; replacing $\nu$ by
$\min\{\nu,\ \min_{k<k_1} k^{-d}\lambda^+(K[\mathcal X_k])\}>0$ gives the bound for
every $k\in\mathbb N$.
\end{proof}

\begin{proof}[Lemma~\ref{lem:norm-equiv}]
\emph{Step 1 (the supremum bound).}
The affine change of variables $x_j=\tfrac{a_j+b_j}{2}+\tfrac{\abs{I_j}}{2}t_j$ maps
$[-1,1]^{d}$ onto $I$ and preserves total degree. Writing $\tilde p(t):=p(x(t))$, we have
$\norm{p}_{\mathcal C^{0}(I)}=\norm{\tilde p}_{\mathcal C^{0}([-1,1]^{d})}$ and
$\norm{p}_{L^{2}(I)}^{2}=2^{-d}\Vol(I)\,\norm{\tilde p}_{L^{2}([-1,1]^{d})}^{2}$, so it
suffices to prove
$\norm{\tilde p}_{\mathcal C^{0}}\le 2^{-d/2}(r+1)^{d}\norm{\tilde p}_{L^{2}}$ on the cube.

Let $P_0,P_1,\dots$ be the Legendre polynomials on $[-1,1]$, so that
$\norm{P_n}_{\infty}=1$ and $\int_{-1}^{1}P_n^{2}=\tfrac{2}{2n+1}$. Since $\tilde p$ has
degree at most $r$ in each coordinate, it expands as
$\tilde p(t)=\sum_{\alpha}c_\alpha\prod_{j=1}^{d}P_{\alpha_j}(t_j)$ over multi-indices
$\alpha$ with $\alpha_j\le r$, and these products are orthogonal on $[-1,1]^{d}$. Hence
\[
  \norm{\tilde p}_{\mathcal C^{0}}\le\sum_{\alpha}\abs{c_\alpha}
  \le\Bigl(\sum_{\alpha}c_\alpha^{2}\prod_{j}\tfrac{2}{2\alpha_j+1}\Bigr)^{1/2}
     \Bigl(\sum_{\alpha}\prod_{j}\tfrac{2\alpha_j+1}{2}\Bigr)^{1/2}
  =\norm{\tilde p}_{L^{2}}
     \Bigl(\prod_{j=1}^{d}\sum_{n=0}^{r}\tfrac{2n+1}{2}\Bigr)^{1/2},
\]
by Cauchy--Schwarz and Parseval. As $\sum_{n=0}^{r}\tfrac{2n+1}{2}=\tfrac{(r+1)^{2}}{2}$,
the last factor equals $2^{-d/2}(r+1)^{d}$, which proves the first inequality
in \eqref{eq:norm-equiv}.

\medskip
\emph{Step 2 (sample paths).}
Each coordinate of $Wx+b$ is affine in $x$, so $\phi$ applied to it is a polynomial of
total degree at most $r$; hence $f(x)=a^{\top}\phi(Wx+b)+c$ lies in $\mathcal P_r$ for
every realisation. For $G$, Step~1 of the proof of Lemma~\ref{lem:polynomial-er} gives
$K(x,y)=\Psi(x)^{\top}M\Psi(y)$ with $\Psi$ a vector of polynomials of degree at most $r$
and $M\succeq0$. Let $\zeta\sim\Normal(0,M)$ and $c'\sim\Normal(0,\tau_c^{2})$ be
independent, and set $\tilde G(x):=\Psi(x)^{\top}\zeta+c'$. Then $\tilde G$ is a centred
Gaussian process with
$\Cov(\tilde G(x),\tilde G(y))=\Psi(x)^{\top}M\Psi(y)+\tau_c^{2}=K_c(x,y)$, so
$\tilde G$ has the law of $G$, and its sample paths lie in $\mathcal P_r$.
\end{proof}
\begin{proof}[Lemma~\ref{lem:disc}]
Partition $I$ into the $k^d$ cells $Q_1,\dots,Q_{k^d}$ of the grid, each of
volume $\Vol(I)/k^d$ and each containing its grid point $x_i$ as midpoint. For
$x \in Q_i$ we have $\norm{x-x_i}_2 \leq \rho_k$ by \eqref{eq:covering}, so
\[
  \abs{h(x)} \leq \abs{h(x_i)} + \abs{h(x)-h(x_i)}
  \leq \abs{h(x_i)} + [h]_{\mathcal{C}^{\gamma}(I)}\norm{x-x_i}_2^{\gamma}
  \leq \abs{h(x_i)} + [h]_{\mathcal{C}^{\gamma}(I)}\rho_k^{\gamma} ,
\]
hence
$h(x)^2 \leq 2h(x_i)^2 + 2\rho_k^{2\gamma}[h]^2_{\mathcal{C}^{\gamma}(I)}$.
Integrating over $Q_i$ and summing over $i$,
\begin{align*}
  \norm{h}^2_{L^2(I)}
  &= \sum_{i=1}^{k^d}\int_{Q_i} h(x)^2\,dx \\
  &\leq \frac{\Vol(I)}{k^d}\sum_{i=1}^{k^d}
    \Bigl(2h(x_i)^2 + 2\rho_k^{2\gamma}[h]^2_{\mathcal{C}^{\gamma}(I)}\Bigr) \\
  &= 2\frac{\Vol(I)}{k^d}\norm{h[\inputs_k]}^2
    + 2\Vol(I)\rho_k^{2\gamma}[h]^2_{\mathcal{C}^{\gamma}(I)} .
\end{align*}
\end{proof}

\begin{proof}[Lemma~\ref{lem:coupling}]
We regard $f$ and $G$ as random elements of the Polish space $\mathcal C^{0}(I)$. The grid evaluations
$f[\inputs]$ and $G[\inputs]$ are random vectors in $\R^{n}$. Let $\pi^{*}$ be an optimal
coupling of their laws for the quadratic Wasserstein distance, which exists by
\citep[Theorem~4.1]{Villani2008OptimalTO}. Since $\mathcal W_{2,n}$ is a positive multiple
of $\Wtwo$, the coupling $\pi^{*}$ is optimal for the rescaled distance as well. Because
$\mathcal C^{0}(I)$ is Polish, the regular conditional distributions
\[
  \mu^{f}_{u}:=\Prob_{f\mid f[\inputs]=u},
  \qquad
  \mu^{G}_{v}:=\Prob_{G\mid G[\inputs]=v},
  \qquad u,v\in\R^{n},
\]
exist, and by disintegration
$1=\Prob\bigl(f[\inputs]=f[\inputs]\bigr)
 =\int_{\R^{n}}\mu^{f}_{u}\bigl(\{\varphi:\varphi[\inputs]=u\}\bigr)\,d\Prob_{f[\inputs]}(u)$,
so $\mu^{f}_{u}$ is concentrated on the fibre $\{\varphi:\varphi[\inputs]=u\}$ for
$\Prob_{f[\inputs]}$-almost every $u$; likewise for $\mu^{G}_{v}$.

Define the measure
\[
  \pi(A):=\int_{\R^{n}\times\R^{n}}\bigl(\mu^{f}_{u}\otimes\mu^{G}_{v}\bigr)(A)\,d\pi^{*}(u,v),
  \qquad A\subseteq\mathcal C^{0}(I)\times\mathcal C^{0}(I)\text{ Borel},
\]
which is a probability measure since $(u,v)\mapsto\mu^{f}_{u}\otimes\mu^{G}_{v}$ is a
probability kernel; on rectangles,
$\pi(A_1\times A_2)=\int\mu^{f}_{u}(A_1)\,\mu^{G}_{v}(A_2)\,d\pi^{*}(u,v)$. The two
coordinate projections on $\bigl(\mathcal C^{0}(I)\times\mathcal C^{0}(I),\pi\bigr)$ define
random functions $\tilde f$ and $\tilde G$. We check the marginals:
\begin{align*}
  \Prob\bigl(\tilde f\in A_1\bigr)
  &=\int_{\R^{n}\times\R^{n}}\mu^{f}_{u}(A_1)\,
     \underbrace{\mu^{G}_{v}\bigl(\mathcal C^{0}(I)\bigr)}_{=1}\,d\pi^{*}(u,v)\\
  &=\int_{\R^{n}}\mu^{f}_{u}(A_1)\,d\Prob_{f[\inputs]}(u)\\
  &=\int_{\R^{n}}\Prob\bigl(f\in A_1\mid f[\inputs]=u\bigr)\,d\Prob_{f[\inputs]}(u)\\
  &=\Prob(f\in A_1),
\end{align*}
where the second equality uses that the first marginal of $\pi^{*}$ is
$\Prob_{f[\inputs]}$, and for $G$ the argument works the same. Moreover, for every Borel
set $A\subseteq\R^{n}\times\R^{n}$,
\begin{align*}
  \Prob\bigl(\bigl(\tilde f[\inputs],\tilde G[\inputs]\bigr)\in A\bigr)
  &=\int_{\R^{n}\times\R^{n}}\bigl(\mu^{f}_{u}\otimes\mu^{G}_{v}\bigr)
     \bigl(\{(\varphi,\psi):(\varphi[\inputs],\psi[\inputs])\in A\}\bigr)\,d\pi^{*}(u,v)\\
  &=\int_{\R^{n}\times\R^{n}}\bigl(\delta_{u}\otimes\delta_{v}\bigr)(A)\,d\pi^{*}(u,v)\\
  &=\int_{\R^{n}\times\R^{n}}\mathbf 1_{A}(u,v)\,d\pi^{*}(u,v)\\
  &=\pi^{*}(A),
\end{align*}
where the second equality uses the concentration on the fibres, whose exceptional sets
are $\pi^{*}$-null because they are null under the marginals of $\pi^{*}$. Therefore
$(\tilde f,\tilde G)$ is a coupling of $f$ and $G$ whose evaluations
$\bigl(\tilde f[\inputs],\tilde G[\inputs]\bigr)$ have the optimal law $\pi^{*}$, and
consequently
\[
  \E\Bigl[\bigl\lVert\tilde f[\inputs]-\tilde G[\inputs]\bigr\rVert_{n}^{2}\Bigr]
  =\int_{\R^{n}\times\R^{n}}\norm{u-v}_{n}^{2}\,d\pi^{*}(u,v)
  =\mathcal W_{2,n}\bigl(f[\inputs],G[\inputs]\bigr)^{2}.
\]
\end{proof}

\begin{proof}[Theorem~\ref{thm:functional}]
Fix $\gamma=\tfrac12$ and $k\ge k_1$, with $k_1$ as in Lemma~\ref{lem:polynomial-er}, and
let $(\tilde f,\tilde G)$ be the coupling of Lemma~\ref{lem:coupling} for $\inputs_k$;
write $h:=\tilde f-\tilde G$, so that $n=\abs{\inputs_k}=k^{d}$.

Since $\phi$ is a polynomial of degree $r$, Lemma~\ref{lem:norm-equiv} gives that $f$ and
$G$ have sample paths in $\mathcal P_r$ almost surely, hence so does $h$, and
\eqref{eq:norm-equiv} may be applied pathwise. Combining it with \eqref{eq:disc} of
Lemma~\ref{lem:disc} and taking expectations,
\begin{equation}
  \label{eq:step1}
  \E\bigl[\norm{h}^{2}_{\mathcal C^{0}(I)}\bigr]
  \;\le\;\frac{(r+1)^{2d}}{\Vol(I)}\,\E\bigl[\norm{h}^{2}_{L^{2}(I)}\bigr]
  \;\le\;2\,(r+1)^{2d}\Bigl(
      \E\bigl[\norm{h[\inputs_k]}_{k^{d}}^{2}\bigr]
      +\rho_k\,\E\bigl[[h]^{2}_{\mathcal C^{1/2}(I)}\bigr]\Bigr),
\end{equation}
the volume cancelling between the two steps.

By Lemma~\ref{lem:coupling} the first expectation equals
$\Wk\bigl(f[\inputs_k],G[\inputs_k]\bigr)^{2}$. The grids $\inputs_k$ satisfy the
effective-rank condition of Assumption~\ref{ass:eff-rank} with a constant $\nu$ that does
not depend on $k$, by Lemma~\ref{lem:polynomial-er}, so Equation~\ref{eq:uniform}
applies with $n=k^{d}$ and gives
\begin{equation}
  \Wk\bigl(f[\inputs_k],G[\inputs_k]\bigr)^{2}\;\le\;\frac{D^{2}}{m},
\end{equation}
uniformly in $k$, with $D$ as in \eqref{eq:Cgrowth}.

For the second expectation, $[\,\cdot\,]_{\mathcal C^{1/2}(I)}$ is a seminorm, so
$[h]^{2}_{\mathcal C^{1/2}(I)}\le 2[\tilde f]^{2}_{\mathcal C^{1/2}(I)}
+2[\tilde G]^{2}_{\mathcal C^{1/2}(I)}$ pathwise. Since $\tilde f\overset{d}{=}f$ and
$\tilde G\overset{d}{=}G$, this gives
\begin{equation}
  \label{eq:Lambda}
  \E\bigl[[h]^{2}_{\mathcal C^{1/2}(I)}\bigr]
  \;\le\;2\,\E\bigl[[f]^{2}_{\mathcal C^{1/2}(I)}\bigr]
        +2\,\E\bigl[[G]^{2}_{\mathcal C^{1/2}(I)}\bigr]
  \;=:\;\Lambda ,
\end{equation}
and $\Lambda$ does not depend on $k$, since it involves only the laws of $f$ and $G$.
Therefore we need to prove that $\Lambda$ stays finite.

To bound the first term, let $W_r$ be the $r$-th row of $W$, so that
$g_r(x)=\langle W_r,x\rangle+b_r$. Since $I$ is convex and $\phi$ is absolutely continuous
with $\abs{\phi'(z)}\le A'(1+\abs{z})^{p'}$
(Assumption~\ref{ass:activation}\,(\ref{it:lipschitz})), each map $x\mapsto\phi(g_r(x))$
is Lipschitz on $I$ with constant
\[
  L_r\;:=\;A'\norm{W_r}\Bigl(1+\max_{x\in I}\abs{g_r(x)}\Bigr)^{p'} .
\]
As $f(x)-f(y)=\sum_{r=1}^{m}a_r\bigl(\phi(g_r(x))-\phi(g_r(y))\bigr)$, for $x,y\in I$
\[
  \abs{f(x)-f(y)}
  \;\le\;\sum_{r=1}^{m}\abs{a_r}L_r\,\norm{x-y}
  \;\le\;\operatorname{diam}(I)^{1/2}\sum_{r=1}^{m}\abs{a_r}L_r\,\norm{x-y}^{1/2},
\]
hence
\[
  \E\bigl[[f]^{2}_{\mathcal C^{1/2}(I)}\bigr]
  \;\le\;\operatorname{diam}(I)\,\E\Bigl[\Bigl(\sum_{r=1}^{m}\abs{a_r}L_r\Bigr)^{2}\Bigr]
  \;<\;\infty ,
\]
the parameters being Gaussian and hence having moments of every order.

The same Lipschitz bound controls the second term. By Step~1 of the proof of
Theorem~\ref{thm:bt}, the pair $\bigl(g_1(x),g_1(y)\bigr)$ has the law of $(Z_x,Z_y)$ in
\eqref{eq:K}, and the constant $\tau_c^{2}$ in \eqref{eq:Kb} cancels, so applying the
bound to the first neuron gives, for $x,y\in I$,
\begin{align*}
  \E\bigl[\bigl(G(x)-G(y)\bigr)^{2}\bigr]
  &=K_c(x,x)-2K_c(x,y)+K_c(y,y)
   =\tau_a^{2}\,\E\bigl[\bigl(\phi(g_1(x))-\phi(g_1(y))\bigr)^{2}\bigr]\\
  &\le\tau_a^{2}\,\E\bigl[L_1^{2}\bigr]\norm{x-y}^{2}\;=:\;L^{2}\norm{x-y}^{2}.
\end{align*}
Since $G(x)-G(y)$ is centred Gaussian, it follows that
\[
  \E\abs{G(x)-G(y)}^{4d}\;\le\;\mu_{4d}L^{4d}\norm{x-y}^{d+3d},
  \qquad \mu_{4d}:=\E\abs{\xi}^{4d},\ \xi\sim\Normal(0,1).
\]
The Kolmogorov--Chentsov theorem in the form of \citet[Theorem~I.2.1]{RevuzYor1999} covers
H\"older exponents below $3d/4d=\tfrac34$, in particular $\tfrac12$, and yields
$\E\bigl[[G]^{4d}_{\mathcal C^{1/2}(I)}\bigr]<\infty$ for the continuous version of $G$
from Lemma~\ref{lem:norm-equiv}, with which the modification provided by the theorem is
indistinguishable. By Jensen's inequality,
$\E\bigl[[G]^{2}_{\mathcal C^{1/2}(I)}\bigr]
\le\E\bigl[[G]^{4d}_{\mathcal C^{1/2}(I)}\bigr]^{1/(2d)}<\infty$, and hence
$\Lambda<\infty$.

The pair $(\tilde f,\tilde G)$ is a coupling of $f$ and $G$ as random elements of
$\mathcal C^{0}(I)$, so by Definition~\ref{def:w2}, \eqref{eq:step1} and \eqref{eq:Lambda},
\begin{equation}
  \label{eq:step2}
  \Wtwo^{\mathcal C^{0}(I)}(f,G)^{2}
  \;\le\;\E\bigl[\norm{h}^{2}_{\mathcal C^{0}(I)}\bigr]
  \;\le\;\frac{2\,(r+1)^{2d}D^{2}}{m}
       \;+\;2\,(r+1)^{2d}\,\Lambda\,\rho_k .
\end{equation}
Neither the left hand side nor the finite constant $\Lambda$ depends on $k$, and
$\rho_k=O(k^{-1})$ by \eqref{eq:covering}, so letting $k\to\infty$ in \eqref{eq:step2} removes the second term and leaves
\[
  \Wtwo^{\mathcal C^{0}(I)}(f,G)^{2}
  \;\le\;\frac{2\,(r+1)^{2d}D^{2}}{m}
  \;=\;\frac{(C^{\ast})^{2}}{m},
\]
since $D=\tau_a^{2}A^{2}\sqrt{M_{4p}/\nu}$ by \eqref{eq:Cgrowth}. Taking square roots gives
\eqref{eq:main}.
\end{proof}

\begin{proof}[Corollary~\ref{cor:compact}]

For $S\subseteq I$ and any $h\in\mathcal C^{0}(I)$ we have
$\norm{h}_{\mathcal C^{0}(S)}\le\norm{h}_{\mathcal C^{0}(I)}$. Applying this pathwise to
$h=f-G$ under any coupling and taking the infimum gives the first inequality;
Theorem~\ref{thm:functional} gives the second.

\end{proof}

\section{Beyond Polynomial Activations}\label{sec:general-activation}

For a general activation, an interpolation inequality of Gagliardo--Nirenberg type \citep{Nirenberg2011} bounds the supremum norm by the
$L^{2}(I)$ norm and the Hölder seminorm together.

\begin{proof}[Lemma~\ref{lem:interp}]
Write $\ell_j := \abs{I_j} > 0$, $\ell_{\min} := \min_j \ell_j$,
$L := [h]_{\mathcal{C}^\gamma(I)}$ and $M := \norm{h}_{\mathcal{C}^0(I)}$.
Since $h$ is continuous and $I$ is compact, $M$ is attained at some
$x_0 \in I$; replacing $h$ by $-h$ if necessary we may assume
$h(x_0) = M \geq 0$. If $M = 0$ the inequality is trivial. If $L = 0$,
then $h \equiv M$ , so
$\norm{h}_{L^2(I)} = M\sqrt{\Vol(I)} \geq M \ell_{\min}^{d/2}$ and already
the second term on the right-hand side is at least $2M$.
Assume from now on $M, L > 0$.

For $x \in I$, H\"older continuity gives
$h(x) \geq M - L \norm{x - x_0}^{\gamma}$. Choose
$\delta := \bigl(\tfrac{M}{2L}\bigr)^{1/\gamma}$, so that
$L\delta^\gamma = \tfrac{M}{2}$, and $\eta := \delta/\sqrt{d}$. Let
\begin{equation*}
  J := \prod_{j=1}^d J_j, \qquad
  J_j := [x_{0,j} - \eta,\, x_{0,j} + \eta] \cap I_j .
\end{equation*}
Every $x \in J$ satisfies $\norm{x - x_0} \leq \sqrt{d}\,\eta = \delta$,
hence $h(x) \geq \tfrac{M}{2}$ on $J$. Since $x_{0,j} \in I_j$, we get $\abs{J_j} \geq \min(\eta, \ell_j)
\geq \min(\eta, \ell_{\min})$, so $\Vol(J) \geq \min(\eta, \ell_{\min})^d$
and
\begin{equation*}
  \norm{h}_{L^2(I)}^2
  \geq \int_J h(x)^2\, dx
  \geq \frac{M^2}{4}\, \min(\eta, \ell_{\min})^d .
\end{equation*}

\emph{Case 1: $\eta \leq \ell_{\min}$.} Substituting $\eta$,
\begin{equation*}
  \norm{h}_{L^2(I)}^2
  \geq \frac{M^2}{4}\, d^{-d/2} \Bigl(\frac{M}{2L}\Bigr)^{d/\gamma}
  = \frac{M^{\frac{2\gamma+d}{\gamma}}}
         {2^{\frac{2\gamma+d}{\gamma}}\, d^{d/2}\, L^{d/\gamma}} ,
\end{equation*}
and raising both sides to the power $\tfrac{\gamma}{2\gamma+d}$ gives
\begin{equation*}
  M \leq 2\, d^{\frac{\gamma d}{2(2\gamma+d)}}\,
  L^{\frac{d}{2\gamma+d}}\,
  \norm{h}_{L^2(I)}^{\frac{2\gamma}{2\gamma+d}} .
\end{equation*}

\emph{Case 2: $\eta > \ell_{\min}$.} Then
$\norm{h}_{L^2(I)}^2 \geq \tfrac{M^2}{4}\, \ell_{\min}^d$, i.e.\
$M \leq 2\, \ell_{\min}^{-d/2}\, \norm{h}_{L^2(I)}$.

In either case $M$ is bounded by one of the two non-negative terms on the
right-hand side of the claim, hence by their sum.
\end{proof}

\end{document}